\newtheorem{theorem}{Theorem}[section]
\newtheorem{corollary}{Corollary}[theorem]
\newtheorem{lemma}[theorem]{Lemma}
\theoremstyle{definition}
\newtheorem{definition}{Definition}[section]
\newcommand{\figleft}{{\em (Left)}}
\newcommand{\figcenter}{{\em (Center)}}
\newcommand{\figright}{{\em (Right)}}
\def\eqref#1{equation~\ref{#1}}
\def\1{\bm{1}}
\DeclareMathAlphabet{\mathsfit}{\encodingdefault}{\sfdefault}{m}{sl}
\SetMathAlphabet{\mathsfit}{bold}{\encodingdefault}{\sfdefault}{bx}{n}
\def\gA{{\mathcal{A}}}
\def\gD{{\mathcal{D}}}
\def\gH{{\mathcal{H}}}
\def\gL{{\mathcal{L}}}
\def\gM{{\mathcal{M}}}
\def\gN{{\mathcal{N}}}
\def\gO{{\mathcal{O}}}
\def\gS{{\mathcal{S}}}
\def\gW{{\mathcal{W}}}
\newcommand{\E}{\mathbb{E}}
\newcommand{\Var}{\mathrm{Var}}
\DeclareMathOperator*{\argmax}{arg\,max}
\title{If MaxEnt RL is the Answer, \\What is the Question?}
\author{
  Benjamin Eysenbach \\
  Carnegie Mellon University, Google Brain \\
  \texttt{beysenba@cs.cmu.edu} \\
   \And
   Sergey Levine \\
   UC Berkeley, Google Brain
}
\begin{document}

\maketitle

\begin{abstract}
Experimentally, it has been observed that humans and animals often make decisions that do not maximize their expected utility, but rather choose outcomes randomly, with probability proportional to expected utility. Probability matching, as this strategy is called, is equivalent to maximum entropy reinforcement learning (MaxEnt RL). However, MaxEnt RL does not optimize expected utility. In this paper, we formally show that MaxEnt RL does optimally solve certain classes of control problems with variability in the reward function. In particular, we show (1) that MaxEnt RL can be used to solve a certain class of POMDPs, and (2) that MaxEnt RL is equivalent to a two-player game where an adversary chooses the reward function. These results suggest a deeper connection between MaxEnt RL, robust control, and POMDPs, and provide insight for the types of problems for which we might expect MaxEnt RL to produce effective solutions. Specifically, our results suggest that domains with uncertainty in the task goal may be especially well-suited for MaxEnt RL methods.
\end{abstract}

\section{Introduction}

Reinforcement learning (RL) searches for a policy that maximizes the expected, cumulative reward. In fully observed Markov decision processes (MDPs), this maximization always has a deterministic policy as a solution. Maximum entropy reinforcement learning (MaxEnt RL) is a modification of the RL objective that further adds an entropy term to the objective.
This additional entropy term causes MaxEnt RL to seek policies that (1) are stochastic, and (2) have non-zero probability of sampling every action.
MaxEnt RL can equivalently be viewed as \emph{probability matching} between trajectories visited by the policy and a distribution defined by exponentiating the reward (See Section~\ref{sec:preliminaries}).
MaxEnt RL has appealing connections to probabilistic inference~\citep{dayan1997using, neumann2011variational, todorov2007linearly, kappen2005path, toussaint2009robot, rawlik2013stochastic, theodorou2010generalized, ziebart2010modeling}, prompting a renewed interest in recent years~\citep{haarnoja2018soft, abdolmaleki2018maximum,  levine2018reinforcement}.
MaxEnt RL can also be viewed as using Thompson sampling~\citep{thompson1933likelihood} to collect trajectories, where the posterior belief is given by the exponentiated return.
Empirically, MaxEnt RL algorithms achieve good performance on a number of simulated~\citep{haarnoja2018soft} and real-world~\citep{haarnoja2018composable, singh2019end} control tasks, and can be more robust to perturbations~\citep{haarnoja2018learning}.

There is empirical evidence that behavior similar MaxEnt RL is used by animals in the natural world.
While standard reinforcement learning is often used as a model for decision decision making~\citep{scott2004optimal, liu2007evidence, todorov2002optimal}, many animals, including humans, do not consistently make decisions that maximize expected utility. Rather, they engage in \emph{probability matching}, choosing actions with probability proportional to how much utility that action will provide.
Examples include ants~\citep{lamb1993foraging}, bees~\citep{greggers1993memory}, fish~\citep{bitterman1958some}, ducks~\citep{harper1982competitive}, pigeons~\citep{bullock1962probability,graf1964further}, and humans, where it has been documented so extensively that~\citet{vulkan2000economist} wrote a survey of surveys of the field. This effect has been observed not just in individuals, but also in the collective behavior of groups of animals (see~\citet{stephens1986foraging}), where it is often described as obtaining the ideal free distribution.
Probability matching is not merely a reflection of youth or ignorance. Empirically, more intelligent creatures are more likely to engage in probability matching. For example, in a comparison of Yale students and rats, \citet{gallistel1990organization} found that the students nearly always performed probability matching, while rats almost always chose the maximizing strategy. Similarly, older children and adults engage in probability matching more frequently than young children~\citep{stevenson1964children, weir1964developmental}.
While prior work has offered a number of explanations of probability matching~\citep{vulkan2000economist,gaissmaier2008smart, wozny2010probability, sakai2008actor}, its root cause remains an open problem.

The empirical success of MaxEnt RL algorithms on RL problems is surprising, as MaxEnt RL optimizes a different objective than standard RL. The solution to every MaxEnt RL problem is stochastic, while deterministic policies can always be used to solve standard RL problems~\citep{puterman2014markov}. While RL can be motivated from the axioms of utility theory~\citep{russell2016artificial}, MaxEnt RL has no such fundamental motivation. It remains an open question as to whether the standard MaxEnt RL objective actually optimizes some well-defined notion of risk or regret that would account for its observed empirical benefits. This paper studies this problem, and aims to answer the following question: \emph{if MaxEnt RL is the solution, then what is the problem?}

In this paper, we show that MaxEnt RL provides the optimal control solution in settings with uncertainty and variability in the reward function. 
More precisely, we show that MaxEnt RL is equivalent to two more challenging problems:
(1) regret minimization in a \emph{meta-POMDP}, and (2) \emph{robust-reward control}.
The first setting, the meta-POMDP, is a partially observed MDP where the reward depends on an unobserved portion of the state, and where multiple episodes in the original MDP correspond to a single extended trial in the meta-POMDP.
While seemingly Byzantine, this type of problem setting arises in a number of real-world settings discussed in Section~\ref{sec:intuition}.
Optimal policies for the meta-POMDP must explore at test-time, behavior that cannot result from maximizing expected utility.
In the second setting, robust-reward control, we consider an adversary that chooses some aspects of the reward function. Intuitively, we expect stochastic policies to be most robust because they are harder to exploit, as we formalize in Section~\ref{sec:adversaries}.
Even if the agent will eventually be deployed in a setting without adversaries, the adversarial objective bounds the worst-case performance of that agent.
Our result in this setting can be viewed as an extension of prior work connecting the principle of maximum entropy to two-player games~\citep{ziebart2011maximum,grunwald2004game}.
While both robust-reward control and regret minimization in a meta-POMDP are natural problems that arise in many real-world scenarios, neither is an expected utility maximization problem, so we cannot expected optimal control to solve these problems.  In contrast, we show that MaxEnt RL provides solutions to both. 
In summary, our analysis suggests that the empirical benefits of MaxEnt RL arise implicitly solving control problems with variability in the reward.

\section{Preliminaries}
\label{sec:preliminaries}

We begin by defining notation and discussing some previous motivations for MaxEnt RL.
An agent observes states $s_t$, takes actions $a_t \sim \pi(a_t \mid s_t)$, and obtains rewards $r(s_t, a_t)$. The initial state is sampled \mbox{$s_1 \sim p_1(s_1)$}, and subsequent states are sampled \mbox{$s' \sim p(s' \mid s, a)$}. Episodes have $T$ steps, which we summarize as a trajectory $\tau \triangleq (s_1, a_1, \cdots, s_T, a_T)$. 
Without loss of generality, we can assume that rewards are undiscounted, as any discount can be addressed by modifying the dynamics to transition to an absorbing state with probability $1 - \gamma$.
The RL objective is:
\begin{equation*}
    \argmax_\pi \;\E_\pi\left[ \sum_{t=1}^T r(s_t, a_t) \right] = \int \left(\sum_{t=1}^T r(s_t, a_t) \right) p_1(s_1) \prod_{t=1}^T p(s_{t+1} \mid s_t, a_t) \pi(a_t \mid s_t) d \tau.
\end{equation*}
In fully observed MDPs, there always exists a deterministic policy as a solution~\citep{puterman2014markov}.
The MaxEnt RL problem, also known as the entropy-regularized control problem, is to maximize the sum of expected reward and conditional action entropy, $\gH_\pi[a \mid s]$:
\begin{equation*}
    \argmax_\pi \;\E_\pi\left[ \sum_{t=1}^T r(s_t, a_t) \right] + \gH_\pi[a \mid s]  = \E_\pi\left[ \sum_{t=1}^T r(s_t, a_t) - \log \pi(a_t \mid s_t) \right]
\end{equation*}
The MaxEnt RL objective results in policies that are stochastic, with higher-entropy action distributions in states where many different actions lead to similarly optimal rewards, and lower-entropy distributions in states where a single action is much better than the rest. Moreover, MaxEnt RL results in policies that have non-zero probability of sampling any action.
MaxEnt RL can equivalently be defined as a form of probability matching, minimizing a reverse Kullback Leibler (KL) divergence~\citep{rawlik2013stochastic}:
\begin{equation*}
    \max_\pi \E_\pi \left[ \sum_{t=1}^T r(s_t, a_t) \right] + \gH_\pi[a \mid s] = - \min_\pi D_{\text{KL}}(\pi(\tau)\; \| \; p_r(\tau)), \label{eq:kl}
\end{equation*}
where the policy distribution $\pi(\tau)$ and the target distribution $p_r(\tau)$ are defined as
\begin{equation*}
    p_r(\tau) \propto p_1(s_1) \prod_{t=1}^T p(s_{t+1} \mid s_t, a_t) e^{\sum_{t=1}^T r(s_t, a_t)}, \quad \pi(\tau) \triangleq p_1(s_1) \prod_{t=1}^T p(s_{t+1} \mid s_t, a_t) \pi(a_t \mid s_t).
\end{equation*}

Prior work on MaxEnt RL offers a slew of intuitive explanations for why one might prefer MaxEnt RL. We will summarize three common explanations and highlights problems with each.

\textbf{Exploration}: MaxEnt RL is often motivated as performing good exploration. Unlike many other RL algorithms, such as DQN~\citep{mnih2015human} and DDPG~\citep{lillicrap2015continuous}, MaxEnt RL performs exploration and policy improvement with the same (stochastic) policy. One problem with this motivation is that stochastic policies can be obtained directly from standard RL, without adding an entropy term~\citep{heess2015learning}. More troubling, while MaxEnt RL learns a stochastic policy, many MaxEnt RL papers evaluate the corresponding deterministic policy~\citep{haarnoja2018soft}, suggesting that the stochastic policy is not what should be optimized.

\textbf{Probabilistic inference}: Connections with probabilistic inference offer a second motivation for MaxEnt RL~\citep{abdolmaleki2018maximum, haarnoja2018soft, todorov2007linearly, levine2018reinforcement, toussaint2009robot}. These approaches cast optimal control as an inference problem be defining additional optimality binary random variables $\gO_t$, equal one with probability proportional to exponentiated reward. These methods then maximize the following likelihood:
\begin{align}
    p(\gO_t) &= \int p_1(s_1) \prod_{t=1}^T p(\gO_t = 1 \mid s_t, a_t) p(s_{t+1} \mid s_t, a_t) \pi(a_t \mid s_t)d \tau \nonumber \\
    &= \int e^{\sum_{t=1}^T r(s_t, a_t)}\prod_{t=1}^T p(s_{t+1} \mid s_t, a_t) \pi(a_t \mid s_t)d \tau \nonumber \\
    &= \E_\pi \left[ e^{\sum_{t=1}^T r(s_t, a_t)}\right] \approx \E_\pi \left[ \sum_{t=1}^T r(s_t, a_t)\right]  + \Var_\pi \left[\sum_{t=1}^T r(s_t, a_t)\right] \label{eq:risk-seeking}
\end{align}
The last term is a cumulant generating function (i.e., the logarithm of a moment generating function~\citep[Chpt. 6]{gut2013probability}), which can be approximated as the sum of expected reward and variance of returns~\citep{mihatsch2002risk}.
Thus, directly maximizing likelihood leads to risk-seeking behavior, not optimal control.
Equation~\ref{eq:risk-seeking} can also be directly obtained by considering an agent with a risk-seeking utility function~\citep{o2018variational}.
While risk seeking behavior can be avoided by maximizing a certain lower bound on Equation~\ref{eq:risk-seeking}~\citep{levine2018reinforcement}, artificially constraining algorithms to maximize a lower bound suggests that likelihood is not what we actually want to maximize.

\textbf{Easier optimization:} Finally, some prior work~\citep{ahmed2018understanding, williams1991function} argues that the entropy bonus added by MaxEnt RL makes the optimization landscape smoother. However, it does not suggest why optimizing the wrong but smooth problem yields a good solution to the original optimization problem.

\section{What Problems Does MaxEnt RL Solve?}
\label{sec:intuition}

MaxEnt RL produces stochastic policies, so we first discuss when stochastic policies may be optimal. Informally, the two strengths of stochastic policies are that they (1) are guaranteed to eventually try every action sequence and (2) do not always choose the same sequence of actions.

The first strength of stochastic policies guarantees that they will not have to wait infinitely long to find a good outcome. Imagine that a cookie is hidden in one of two jars. A policy that always chooses to look in the same jar (say, the left jar) may never find the cookie if it is hidden in the other jar (the right jar). Such a policy would incur infinite regret.
This need to try various approaches arises in many realistic settings where we do not get to observe the true reward function, but rather have a belief over what the true reward is.
For example, in a health-care setting, consider the course of treatment for a patient. The desired outcome is to cure the patient. However, whether the patient is cured by different courses of treatment depends on their illness, which is unknown. A physician will prescribe medications based on his beliefs about the patient's illness. If the medication fails, the patient returns to the physician the next week, and the physician recommends another medication. This process will continue until the patient is cured. The physician's aim is to minimize the number of times the patient returns.
Another example is a robot that must perform chores in the home based on a user's commands. The true goal in this task is to satisfy the user. Their desires are never known with certainty, but must be inferred from the user's behavior. Indeed, arguably the majority of problems to which we might want to apply reinforcement learning algorithms are actually problems where the true reward is unobserved, and the reward function that is provided to the agent represents an imperfect belief about the goal.
In Section~\ref{sec:uncertainty}, we define a meta-level POMDP for describing these sorts of tasks and show that MaxEnt RL minimizes regret in such settings.

The second strength of stochastic policies is that they are harder to exploit. For example, in the game rock-paper-scissors (``ro-sham-bo''), it is bad to always choose the same action (say, rock) because an adversary can always choose an action that makes the player perform poorly (e.g., by choosing paper). Indeed, the Nash existence theorem~\citep{nash1950equilibrium} requires stochastic policies to guarantee that a Nash equilibrium exists. In RL, we might likewise expect that a randomized policies are harder to exploit than deterministic policies.
To formalize the intuition that MaxEnt policies are robust against adversaries, we define the robust-reward control problem.
\begin{definition} \label{def:robust-reward}
The \emph{robust-reward control problem} for a set of reward functions $R = \{r_i\}$ is
\begin{equation*}
    \argmax_\pi \min_{r' \in R} \E_\pi \left[\sum_{t=1}^T r'(s_t, a_t) \right].
\end{equation*}
\end{definition}
We can think of this optimization problem as a two-player, zero-sum game between a \emph{policy player} and an adversarial \emph{reward player}. The policy player chooses the sequence of actions in response to observations, while the reward player chooses the reward function against which the states and actions will be evaluated.
This problem is slightly different from typical robust control~\citep{zhou1998essentials}, as it considers perturbations to rewards, not dynamics.
Typically, solving the robust-reward control problem is challenging because it is a saddle-point problem. Nonetheless, in Section~\ref{sec:adversaries}, we show that MaxEnt RL is exactly equivalent to solving a robust-reward control problem.

Together, these two properties suggest that stochastic policies, such as those learned with MaxEnt RL, can be robust to variability in the reward function. This variability may be caused by (1) a designer's uncertainty about what the right reward should be, (2) the presence of perturbations to the reward (e.g., for an agent that interacts with human users, who might have different needs and wants in each interaction), or (3) partial observability (e.g., a robot in a medical setting may not observe the true cause for a patient's illness). In this paper, we formally show that MaxEnt RL algorithms produce policies that are robust to two distinct sources of reward variability: unobserved rewards in partially observed Markov decision processes (POMDPs) and adversarial variation in the rewards.

\section{Maximum Entropy RL and Partially Observed Environments}
\label{sec:uncertainty}

In this section, we formalize the intuition from Section~\ref{sec:intuition} that stochastic policies are preferable in settings with unknown tasks. We first describe the problem of solving an unknown task as a special class of POMDPs, and then show that MaxEnt RL provides the optimal solution for these POMDPs.
Our results in this section suggest a tight coupling between MaxEnt RL and regret minimization.

We begin by defining the \emph{meta-POMDP} as a MDP with many possible tasks that could be solved. Solving a task might mean reaching a particular goal state or performing a particular sequence of actions. We will use the most general definition of success as simply matching some target trajectory, $\tau^*$. Crucially, the agent does not know which task it must solve (i.e., $\tau^*$ is not observed). Rather, the agent has access to a belief $p(\tau)$ over what the target trajectory may be. This results in a POMDP, where the agent's ignorance of the true task makes the problem partial observed. 

Each \emph{meta-step} of the meta-POMDP corresponds to one episode of the original MDP. A \emph{meta-episode} is a sequence of meta-steps, which ends when the agent solves the task in the original MDP. Intuitively, each meta-episode in the meta-POMDP corresponds to multiple trials in the original MDP, where the task remains the same across trials. The agent keeps interacting with the MDP until it solves the task. While the meta-POMDP might seem counter-intuitive, it captures many practical scenarios. For example, in the health-care setting in Section~\ref{sec:intuition}, the physician does not know the patient's illness, and may not even know when the patient has been cured. Each meta-step corresponds to one visit to the physician, which might entail running some tests, performing an operation, and prescribing a new medication. The meta-episode is the sequence of patient visits, which ends when the patient is cured. As another example, Appendix~\ref{sec:memoryless-meta-learning} describes how meta-learning can also be viewed as a meta-POMDP.

Before proceeding, we emphasize that defining the meta-POMDP in terms of trajectory distributions is strictly more general than defining it in terms of state distributions. However, Section~\ref{sec:goal-reaching} will discuss how goal-reaching, a common problem setting in current RL research~\citep{lee2019efficient, warde2018unsupervised, pong2019skew}, can be viewed as a special case of this general formulation.

\subsection{Regret in the Meta-POMDP}

The meta-POMDP has a simple reward function: $+1$ when the task is completed, and $0$ otherwise. Since the optimal policy would solve the task immediately, its reward on every meta-step would be one. Therefore, the regret is given by $1 - 0 = 1$ for every meta-step when the agent fails to solve the task, and $1 - 1 = 0$ for the (final) meta-step when the agent solves the task.
Thus, the cumulative regret of an agent is the expected number of meta-steps required to complete the task.
For example, in the health-care example, the regret is the number of times the patient visits the physician before being cured.

Mathematically, we use $\pi(\tau^*)$ to denote the probability that policy $\pi$ produces target trajectory $\tau^*$.
Then, the number of episodes until it matches trajectory $\tau^*$ is a geometric random variable with parameter $\pi(\tau^*)$. The expected value of this random variable is $1 / \pi(\tau^*)$, so we can write the regret of the meta-POMDP as:
\begin{equation*}
    \text{Regret}_p(\pi) = \E_{\tau^* \sim p} \left[ \frac{1}{\pi(\tau^*)} \right]. \label{eq:regret}
\end{equation*}
Note that this regret is a function of a particular policy $\pi(a \mid s)$, evaluated over potentially infinitely many steps in the original MDP. A policy that never replicates the target trajectory incurs infinite regret. Thus, we expect that optimal policies for the meta-POMDP will be stochastic.

\subsection{Solving the Meta-POMDP}
\label{sec:solving-meta-pomdp}
We solve the meta-POMDP by finding an optimal distribution over trajectories:
\begin{equation*}
    \min_\pi \; \text{Regret}_p(\pi) \quad \text{s.t.} \quad \int \pi(\tau) d \tau = 1, \; \pi(\tau) > 0 .
\end{equation*}
Using Lagrange multipliers (see Appendix~\ref{appendix:lagrange}), we find that the optimal policy is:
\begin{equation*}
    \pi(\tau) = \frac{\sqrt{p(\tau)}}{\int \sqrt{p(\tau')} d\tau'}.
\end{equation*}
This policy is stochastic and matches the unnormalized distribution $\sqrt{p(\tau)}$. This result suggests that we can find the optimal policy by solving a MaxEnt RL problem, with a \emph{trajectory-level} reward function $r_\tau(\tau) \triangleq \frac{1}{2} \log p(\tau)$. 
To make this statement precise, we consider the bandit setting and MDP setting separately. If the underlying MDP is a bandit, then trajectories are equivalent to actions. We can define a reward function as $r(a) = \frac{1}{2} \log p(a)$. Applying MaxEnt RL to this reward function yields the following policy, which is optimal for the meta-POMDP: $\pi(a) \propto \sqrt{p(a)}$. For MDPs with horizon lengths greater than one, we can make a similar statement:
\begin{lemma} \label{lemma:meta-pomdp-traj}
Let a goal trajectory distribution $p(\tau)$ be given, and assume that there exists a policy $\pi$ whose trajectory distribution is proportional to the square-root of the target distribution: $\pi(\tau) \propto \sqrt{p(\tau)}$.  Then there exists a reward function $r(s, a)$ such that the MaxEnt RL problem with $r(s, a)$ and the meta-POMDP have the same solution(s).
\end{lemma}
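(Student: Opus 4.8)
The plan is to sidestep the obvious but doomed approach and instead exhibit one explicit per-step reward that makes the MaxEnt RL target distribution equal to the normalized $\sqrt{p(\tau)}$, after which the KL characterization of MaxEnt RL from Section~\ref{sec:preliminaries} identifies its solution set with the meta-POMDP optimum found in Section~\ref{sec:solving-meta-pomdp}. First I would use the hypothesis to fix a policy $\pi$ with $\pi(\tau) \propto \sqrt{p(\tau)}$; by the Lagrange computation in Section~\ref{sec:solving-meta-pomdp} this is exactly the regret-minimizing policy for the meta-POMDP. The tempting route is to seek a reward with $\sum_t r(s_t,a_t) = \tfrac{1}{2}\log p(\tau) + \text{const}$, but this fails in general because $p(\tau)$ need not factor over time steps. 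Instead I would simply set $r(s,a) \triangleq \log \pi(a \mid s)$, which is a legitimate state-action reward (finite on the support of $\pi$).

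With this choice, $e^{\sum_t r(s_t,a_t)} = \prod_t \pi(a_t \mid s_t)$, so the target distribution from Section~\ref{sec:preliminaries} becomes $p_r(\tau) \propto p_1(s_1)\prod_t p(s_{t+1}\mid s_t,a_t)\prod_t \pi(a_t\mid s_t) = \pi(\tau)$; since $\pi(\tau)$ is already a normalized probability distribution the partition function is $1$, giving $p_r(\tau) = \pi(\tau) \propto \sqrt{p(\tau)}$ exactly. I would then invoke the fact that solving MaxEnt RL with reward $r$ is equivalent to minimizing $\KL(\pi'(\tau) \| p_r(\tau))$ over policies $\pi'$. Because $p_r = \pi$ is realizable (it is literally the trajectory distribution of the valid policy $\pi$), this divergence attains its minimum value $0$, and a policy is a MaxEnt optimizer \emph{if and only if} its trajectory distribution equals $p_r(\tau) \propto \sqrt{p(\tau)}$. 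This is precisely the condition characterizing the meta-POMDP optimizers, so the two problems have identical solution sets, which proves the lemma.

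The main obstacle, and the point where the hypothesis does all the work, is realizability: $\sqrt{p(\tau)}$ normalized is a valid MaxEnt target only if it can arise as the trajectory distribution of some policy under the given dynamics, since every $p_r$ necessarily inherits the true transition factors $p_1(s_1)\prod_t p(s_{t+1}\mid s_t,a_t)$. The assumed existence of $\pi$ guarantees this realizability and simultaneously hands us the per-step conditionals $\pi(a\mid s)$ needed to build a Markovian reward, thereby circumventing the non-factorizability of $\log p(\tau)$. I would also flag a minor technicality: where $p(\tau)$ vanishes, one restricts attention to its support so that $\log \pi(a\mid s)$ remains finite, which does not affect the argument.
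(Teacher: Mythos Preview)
Your proposal is correct and is essentially the same argument as the paper's: the paper first introduces the trajectory-level reward $r_\tau(\tau)=\tfrac{1}{2}\log p(\tau)$ and then invokes a decomposition lemma (Appendix~\ref{sec:decompose}) whose construction is precisely $r(s,a)=\log\pi(a\mid s)+\text{const}$, which is the reward you write down directly. You simply collapse the two steps into one, skipping the trajectory-level detour, but the key idea---use the log of the assumed matching policy's conditionals as the per-step reward so that the MaxEnt target equals $\pi(\tau)\propto\sqrt{p(\tau)}$---is identical.
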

\begin{proof}
Let $\pi$ be the solution to the meta-POMDP. Our assumption that $\pi(\tau) \propto \sqrt{p(\tau)}$ implies that $\pi$ is the solution to the MaxEnt RL problem with the trajectory-level reward $r(\tau) = \frac{1}{2} \log p(\tau)$:
\begin{equation*}
    \pi \in \arg\min_\pi \E_\pi\left[\frac{1}{2} \log p(\tau) \right] + \gH_\pi[a \mid s] = D_{\text{KL}}(\pi(\tau)\; \| \; \frac{1}{c}\sqrt{p(\tau)}) \iff \pi(\tau) \propto \sqrt{p(\tau)} \quad \forall \tau.
\end{equation*}
The normalizing constant $c$, which is independent from $\pi$, is introduced to handle the fact that $\sqrt{p(\tau)}$ does not integrate to one. The implication comes from the fact that the KL is minimized when its arguments are equal.
We show in Appendix~\ref{sec:decompose} that a trajectory-level reward $r(\tau)$ can always be decomposed into an state-action reward $r(s_t, a_t)$ with the same MaxEnt RL solution. Thus, there exists a reward function such that MaxEnt RL solves the meta-POMDP:
\begin{equation*}
    \pi \in \arg\min_\pi \E_\pi\left[ \sum_{t=1}^T r(s_t, a_t) \right] + \gH_\pi[a \mid s] \iff \pi(\tau) \propto \sqrt{p(\tau)} \quad \forall \tau .
\end{equation*}
\end{proof}
The obvious criticism of the proof above is that it is not constructive, failing to specify how the MaxEnt RL reward might be obtained. Nonetheless, our analysis illustrates why MaxEnt RL methods might work well: even when the meta-POMDP is unknown, MaxEnt RL methods will minimize regret in \emph{some} meta-POMDP, which could account for their good performance, particularly in the presence of uncertainty and perturbations. 

\subsection{Goal-Reaching Meta-POMDPs}
\label{sec:goal-reaching}
We can make the connection between MaxEnt RL and meta-POMDPs more precise by considering a special class of meta-POMDPs: meta-POMDPs where the target distribution is defined only in terms of the last state in a trajectory, corresponding to \emph{goal-reaching} problems.
While prior work on goal-reaching~\citep{kaelbling1993learning,schaul2015universal, andrychowicz2017hindsight} assumes that the goal state is observed, the goal-reaching meta-POMDP only assumes that the policy has a belief about the goal state. 
\begin{lemma} \label{lemma:meta-pomdp-state}
Let a meta-POMDP with target distribution that depends solely on the last state and action in the trajectory be given. That is, the target distribution $p(\tau)$ satisfies
\begin{equation*}
    s_T(\tau) = s_T(\tau') \; \text{and} \; a_T(\tau) = a_T(\tau') \implies p(\tau) = p(\tau') \; \forall \tau, \tau'
\end{equation*}
where $s_T(\tau)$ and $a_T(\tau)$ are functions that extract the last state and action in trajectory $\tau$. We can thus write the density of a trajectory under the goal trajectory distribution as a function of the last state and action: \mbox{$\tilde{p}(s_T(\tau), a_T(\tau)) = p(\tau)$}, where $\tilde{p}(s_T, a_T)$ is an unnormalized density.
Assume that there exists a policy whose marginal state density at the last time step, $\rho_\pi^T(s, a)$, satisfies $\rho_\pi^T(s, a) \propto \sqrt{\tilde{p}(s, a)}$ for all states $s$.
Then the MaxEnt RL problem with reward \mbox{$r(s_t, a_t) \triangleq \frac{1}{2} \mathbbm{1}(t = T) \cdot \log \tilde{p}(s_t, a_t)$} and the meta-POMDP have the same solutions.
\end{lemma}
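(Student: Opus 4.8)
The plan is to mirror the argument of Section~\ref{sec:solving-meta-pomdp} and Lemma~\ref{lemma:meta-pomdp-traj}, but carried out at the level of the final-step state-action marginal $\rho_\pi^T$ rather than the full trajectory distribution. First I would observe that, because both the target density $\tilde{p}$ and the reward are functions of $(s_T, a_T)$ alone, every quantity of interest collapses onto $\rho_\pi^T$. In the goal-reaching meta-POMDP a \emph{success} means reaching the goal state-action at the terminal step, so the probability of solving a given task $(s^*, a^*)$ in a single episode is exactly $\rho_\pi^T(s^*, a^*)$. The number of episodes until success is then geometric with this parameter, and hence the regret reduces to $\E_{(s^*,a^*)\sim \tilde{p}}\left[ 1/\rho_\pi^T(s^*, a^*) \right]$, with $\tilde{p}$ normalized to a belief distribution over goals.

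Next I would minimize this regret over the marginal $\rho_\pi^T$ subject to $\int \rho_\pi^T = 1$ and $\rho_\pi^T > 0$, reusing the Lagrange-multiplier computation of Appendix~\ref{appendix:lagrange} verbatim, only with state-action marginals in place of trajectories. This yields the optimal marginal $\rho_\pi^T(s, a) \propto \sqrt{\tilde{p}(s, a)}$, which is realizable precisely by the hypothesis of the lemma. Thus the meta-POMDP solution is characterized by $\rho_\pi^T \propto \sqrt{\tilde{p}}$, exactly the square-root matching seen in Lemma~\ref{lemma:meta-pomdp-traj}, now one level down at the marginal.

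On the MaxEnt side, the terminal reward gives cumulative reward $\sum_{t=1}^T r(s_t, a_t) = \frac{1}{2}\log \tilde{p}(s_T, a_T)$, so the objective is $J(\pi) = \frac{1}{2}\E_{\rho_\pi^T}[\log \tilde{p}(s, a)] + \gH_\pi[a \mid s]$. Adding and subtracting the entropy $\gH[\rho_\pi^T]$ of the marginal, I would rewrite this as
\[
J(\pi) = -D_{\text{KL}}\big(\rho_\pi^T \,\|\, \tfrac{1}{c}\sqrt{\tilde{p}}\big) + \log c + \big(\gH_\pi[a \mid s] - \gH[\rho_\pi^T]\big), \qquad c = \textstyle\int \sqrt{\tilde{p}}.
\]
The first term is nonpositive and vanishes exactly when $\rho_\pi^T = \frac{1}{c}\sqrt{\tilde{p}}$, which coincides with the regret optimum derived above. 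In the bandit (single-step) special case the residual parenthesized term vanishes identically, so $J$ is maximized precisely at $\rho_\pi^T \propto \sqrt{\tilde{p}}$ and the claim is immediate.

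The main obstacle is that residual term $\gH_\pi[a \mid s] - \gH[\rho_\pi^T]$: the surplus action entropy accumulated on the path to the terminal step depends on more than $\rho_\pi^T$ and, for horizons greater than one, can in principle compete with the marginal-matching KL and pull the MaxEnt maximizer away from $\sqrt{\tilde{p}}$. The crux of the argument is therefore to show this surplus does not displace the optimum, equivalently that among policies attaining the marginal $\frac{1}{c}\sqrt{\tilde{p}}$ there is one the MaxEnt objective also favors on its path entropy. This is exactly what the realizability hypothesis $\rho_\pi^T \propto \sqrt{\tilde{p}}$ supplies, forcing the two optimizers to coincide; as with Lemma~\ref{lemma:meta-pomdp-traj}, I would present this existentially rather than constructing the reward explicitly, emphasizing that the hypothesis is precisely the condition that aligns the MaxEnt solution with the meta-POMDP solution.
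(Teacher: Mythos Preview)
Your route diverges from the paper's in a way that creates a real gap. The paper's proof is a one-line reduction to Lemma~\ref{lemma:meta-pomdp-traj}: since $p(\tau)=\tilde p(s_T,a_T)$, summing the given terminal reward over a trajectory yields
\[
\sum_{t=1}^T r(s_t,a_t)=\tfrac12\log\tilde p(s_T,a_T)=\tfrac12\log p(\tau)=r_\tau(\tau),
\]
so MaxEnt RL with this state-action reward is \emph{literally the same} MaxEnt problem appearing in Lemma~\ref{lemma:meta-pomdp-traj}, and the conclusion follows immediately. No marginal-level argument is needed, and the reward is already explicit in the statement, so there is nothing to ``present existentially.''

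Your decomposition $J(\pi)=-D_{\mathrm{KL}}\!\bigl(\rho_\pi^T\,\|\,\tfrac1c\sqrt{\tilde p}\bigr)+\log c+\bigl(\gH_\pi[a\mid s]-\gH[\rho_\pi^T]\bigr)$ is algebraically correct, but the closing step does not go through. You assert that the realizability hypothesis ``forces the two optimizers to coincide,'' yet that hypothesis only says \emph{some} policy achieves the marginal $\rho_\pi^T\propto\sqrt{\tilde p}$; it says nothing about where the MaxEnt objective---which also rewards accumulated path entropy via the residual---actually attains its maximum. For $T>1$ the residual $\gH_\pi[a\mid s]-\gH[\rho_\pi^T]$ is not a function of $\rho_\pi^T$ alone and can trade off against the KL term, so you have not shown that the MaxEnt maximizer lands on the square-root marginal. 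The clean fix is to stay at the trajectory level: use the KL form of MaxEnt RL, $-D_{\mathrm{KL}}(\pi(\tau)\,\|\,p_r(\tau))$, observe that the target $p_r$ induced by your terminal reward coincides with the target induced by $r_\tau(\tau)=\tfrac12\log p(\tau)$, and invoke Lemma~\ref{lemma:meta-pomdp-traj} directly---which is exactly what the paper does.
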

\begin{proof}
We simply combine Lemma~\ref{lemma:meta-pomdp-traj} with the definition of the reward function $r$:
\begin{equation*}
    \sum_{t=1}^T r(s_t, a_t) = \sum_{t=1}^T \frac{1}{2} \mathbbm{1}(t = T) \cdot \log p(s_t, a_t) = \frac{1}{2} \log \tilde{p}(s_T, s_T) = r_\tau(\tau).
\end{equation*}
\end{proof}
While our assumption that there exists a policy that exactly matches some distribution ($\sqrt{\tilde{p}(s_T, a_T)}$) may seem somewhat unnatural, we provide a sufficient condition in Appendix~\ref{appendix:exists-matching-policy}.
Further, while the analysis so far has considered the equivalence of MaxEnt RL and the meta-POMDP at optimum, in Appendix~\ref{sec:meta-pomdp-bound} we bound the difference between these problems away from their optima.
In summary, the meta-POMDP allows us to represent goal-reaching tasks with uncertainty in the true goal state. Moreover, solving these goal-reaching meta-POMDPs with MaxEnt RL is straightforward, as the reward function for MaxEnt RL is a simple function of the last transition.

\subsection{A Computational Experiment}
\label{sec:meta-pomdp-experiment}

To conclude this section, we present a simple computational experiment to verify that MaxEnt RL does solve the meta-POMDP. We instantiated the meta-POMDP using 5-armed bandits. Each meta-POMDP is specified by a prior belief $p(i)$ over the target arm. We sample this distribution from a Dirichlet(1). To solve this meta-POMDP, we applied MaxEnt RL using a reward function \mbox{$r(i) = \frac{1}{2} \log p(i)$}, as derived above. When the agent pulls arm $i$, it observes a noisy reward $r_i \sim \gN(r(i), 1)$. To implement the MaxEnt RL approach, we
\begin{wrapfigure}[15]{r}{0.5\textwidth}
  \vspace{-1em}
  \begin{center}
    \includegraphics[width=0.5\textwidth]{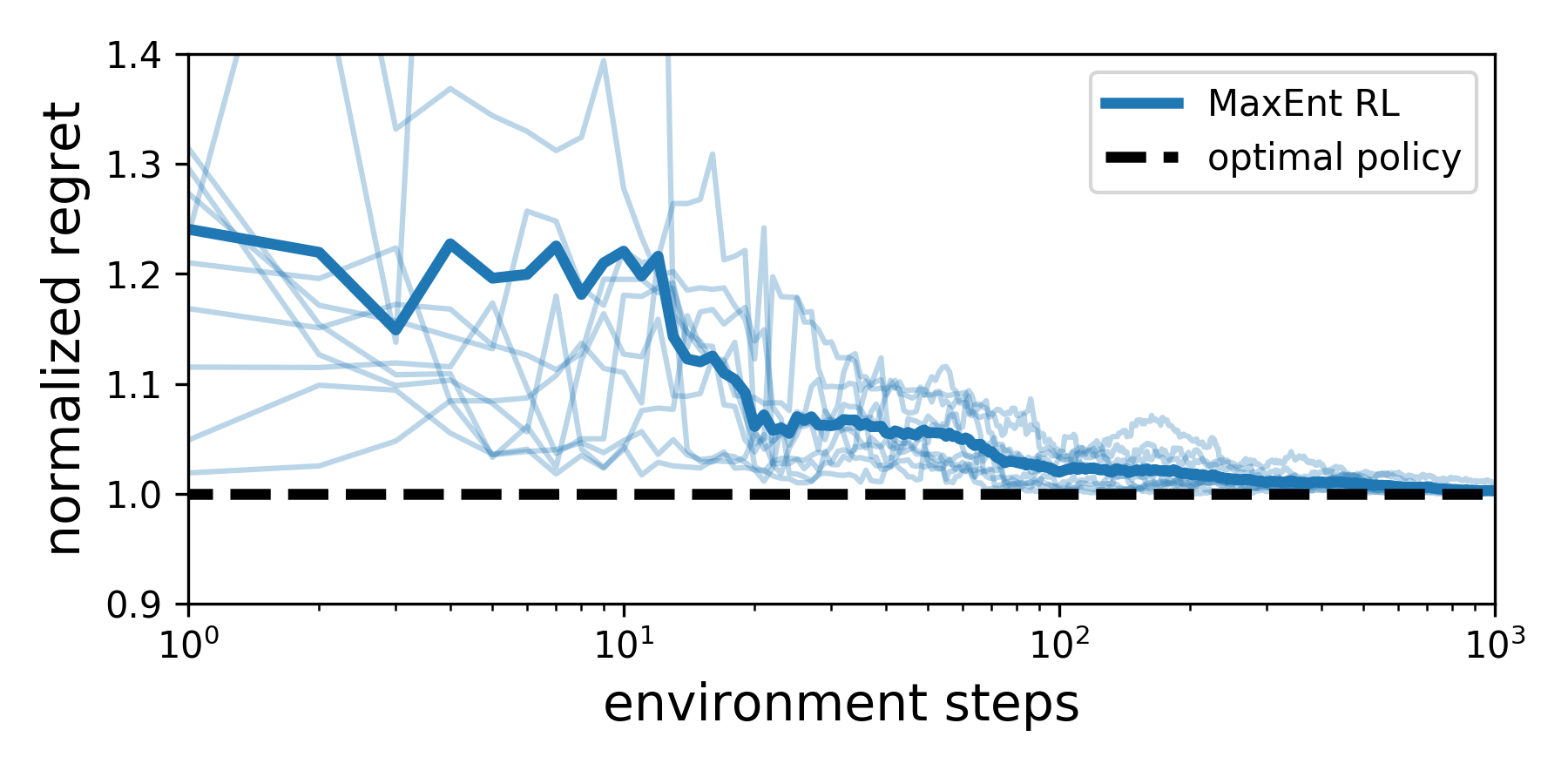}
  \end{center}
  \vspace{-1em}
  \caption{\footnotesize{\textbf{MaxEnt RL solves the Meta-POMDP.} On a 5-armed bandit problem, we show that solving a MaxEnt RL problem with a reward of $r(i) = \frac{1}{2} \log p(i)$ minimizes regret on the meta-POMDP defined by $p(i)$.
  The thick line is the average across 10 randomly-generated bandit problems (thin lines). Lower is better.\label{fig:metapomdp}}}
\end{wrapfigure}
maintained the posterior of the reward $r(i)$, given our observations so far. We initialized our beliefs with a zero-mean, unit-variance Gaussian prior. To obtain a policy with MaxEnt RL, we chose actions with probability proportional to the exponentiated expected reward.
Throughout training, we tracked the regret (Eq.~\ref{eq:regret}). Since the minimum regret possible for each meta-POMDP is different, we normalized the regret for each meta-POMDP by dividing by the minimum possible regret. Thus, the normalized regret lies in the interval $[1, \infty]$, with lower being better. 
Figure~\ref{fig:metapomdp} shows that MaxEnt RL converges to the regret-minimizing policy for each meta-POMDP. Code to reproduce all experiments is available online.\footnote{Code: \url{https://drive.google.com/file/d/1Xf3OTxWBg67L2ka1eLd32qDQmtnG8Fxc}}

\section{Maximum Entropy RL and Adversarial Games}
\label{sec:adversaries}

While the meta-POMDP considered in the previous setting was defined in terms of task uncertainty, that uncertainty was fixed throughout the learning process. We now consider uncertainty introduced by an adversary who perturbs the reward function, and show how MaxEnt RL's aversion to deterministic policies provides robustness against these sorts of adversaries. In particular, we show MaxEnt RL is equivalent to solving \emph{robust-reward control}, and run a computational experiment to support our claims.
We generalize these results in Appendix~\ref{appendix:adversaries}.

\subsection{MaxEnt RL solves Robust-Reward Control}
Our main result on reward robustness builds on the general equivalence between entropy maximization and game theory from prior work.
To start, we note two results from prior work that show how entropy maximization can be written as a robust optimization problem:
\begin{lemma}[\citet{grunwald2004game}]
Let $x$ be a random variable, and let $\mathcal{P}$ be the set of all distributions over $x$. The problem of choosing a maximum entropy distribution for $x$ and maximizing the worst-case log-loss are equivalent:
\begin{equation*}
    \max_{p \in \mathcal{P}} \mathcal{H}_p[x] = \max_{p \in \mathcal{P}} \min_{q \in \mathcal{P}} \E_p[-\log q(x)].
\end{equation*}
\end{lemma}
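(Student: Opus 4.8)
The plan is to recognize the inner objective $\E_p[-\log q(x)]$ as the cross-entropy between $p$ and $q$ and to exploit its standard decomposition into entropy plus a Kullback--Leibler divergence. Concretely, for any fixed $p \in \mathcal{P}$ I would write
\begin{equation*}
    \E_p[-\log q(x)] = \E_p[-\log p(x)] + \E_p\left[\log \frac{p(x)}{q(x)}\right] = \mathcal{H}_p[x] + \KL(p \,\|\, q).
\end{equation*}
This rewriting is the entire conceptual content of the lemma: the worst-case log-loss objective separates into a term that is intrinsic to $p$ and a term that measures the mismatch between $p$ and the predictive distribution $q$.

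Next I would perform the inner minimization over $q$. Since $\mathcal{H}_p[x]$ is independent of $q$, minimizing the cross-entropy is equivalent to minimizing $\KL(p \,\|\, q)$. By Gibbs' inequality (non-negativity of the KL divergence), $\KL(p \,\|\, q) \geq 0$ with equality if and only if $q = p$, so the inner minimum is attained at $q = p$ and equals
\begin{equation*}
    \min_{q \in \mathcal{P}} \E_p[-\log q(x)] = \mathcal{H}_p[x].
\end{equation*}
Substituting this pointwise identity back into the outer maximization then immediately yields
\begin{equation*}
    \max_{p \in \mathcal{P}} \min_{q \in \mathcal{P}} \E_p[-\log q(x)] = \max_{p \in \mathcal{P}} \mathcal{H}_p[x],
\end{equation*}
which is the claimed equivalence. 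Because the inner optimization collapses the objective to exactly $\mathcal{H}_p[x]$ for every $p$, the two problems share not only their optimal value but also their full set of maximizers.

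There is no genuine obstacle in this argument; the only point deserving care is the treatment of distributions $q$ that are not absolutely continuous with respect to $p$. For such $q$ there is a set of positive $p$-mass on which $q(x) = 0$, forcing $\E_p[-\log q(x)] = +\infty$; since this can only increase the cross-entropy, it never lowers the inner minimum, and the minimizer $q = p$ is unaffected. I would also emphasize that the stated equality holds as an identity between optimization problems irrespective of whether an entropy-maximizing $p$ actually exists (for instance when $x$ ranges over an unbounded domain), since the collapse of the inner minimization is established separately for each fixed $p$ and does not rely on the attainability of the outer maximum.
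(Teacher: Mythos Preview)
The paper does not supply its own proof of this lemma; it is quoted verbatim as a result from \citet{grunwald2004game} and used as a black box to derive Corollary~\ref{corollary:entropy} and Theorem~\ref{thm:maxent-robust}. Your argument---decomposing the cross-entropy as $\mathcal{H}_p[x] + \KL(p\,\|\,q)$ and invoking Gibbs' inequality to collapse the inner minimization to $q=p$---is correct and is the standard derivation of this classical fact, so there is nothing to compare against here.
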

An immediately corollary is that maximizing the entropy of any \emph{conditional} distribution is equivalent to a robust optimization problem:
\begin{corollary}[\citet{grunwald2004game, ziebart2011maximum}] \label{corollary:entropy}
Let $x$ and $y$ be random variables, and let $\mathcal{P}_{x \mid y}$ be the set of all conditional distributions $p(x \mid y)$. The problem of choosing a maximum entropy distribution for the conditional distribution $p(x \mid y)$ and maximizing the worst-case log-loss of $x$ given $y$ are equivalent:
\begin{equation*}
    \max_{p \in \mathcal{P}_{x \mid y}} \E_y \left[\mathcal{H}_p[x \mid y] \right] = \max_{p \in \mathcal{P}_{x \mid y}} \min_{q \in \mathcal{P}_{x \mid y}} \E_y \left[\E_p[-\log q(x \mid y)]\right] .
\end{equation*}
\end{corollary}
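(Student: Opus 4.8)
The plan is to reduce the conditional statement to the unconditional Lemma by fixing the value of $y$, applying the Lemma to the sliced distributions $p(\cdot \mid y)$ and $q(\cdot \mid y)$ over $x$, and then integrating the resulting pointwise identity against the (fixed) marginal of $y$. The only thing that genuinely requires justification is the interchange of the expectation $\E_y$ with the outer $\max_p$ and the inner $\min_q$; the nontrivial minimax equality itself is inherited verbatim from the Lemma and needs no separate argument.

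First I would observe that an element $p(x \mid y) \in \mathcal{P}_{x \mid y}$ amounts to an \emph{independent} choice of a distribution $p(\cdot \mid y) \in \mathcal{P}$ over $x$ for each value of $y$, so that $\mathcal{P}_{x \mid y}$ is a product (over $y$) of copies of $\mathcal{P}$. Since the objective $\E_y\!\left[\mathcal{H}_p[x \mid y]\right]$ is an average over $y$ of terms, each depending only on the slice $p(\cdot \mid y)$, the maximization decouples and the max passes inside the expectation:
\[
    \max_{p \in \mathcal{P}_{x \mid y}} \E_y\!\left[\mathcal{H}_p[x \mid y]\right] = \E_y\!\left[\max_{p(\cdot \mid y) \in \mathcal{P}} \mathcal{H}_p[x \mid y]\right].
\]
For each fixed $y$, the inner quantity is exactly the left-hand side of the Lemma applied to $x$ distributed as $p(\cdot \mid y)$, so I may replace it by the Lemma's right-hand side, obtaining $\max_{p(\cdot \mid y)} \min_{q(\cdot \mid y)} \E_{p(\cdot \mid y)}[-\log q(x \mid y)]$ under the expectation.

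It then remains to run the decoupling in reverse. For any fixed conditional $p$, the minimizing conditional $q$ in $\E_y[\E_p[-\log q(x \mid y)]]$ can be chosen slice-by-slice, because the integrand at one value of $y$ places no constraint on the minimizer at another; hence the $\min_q$ pulls out through $\E_y$. The same additivity then lets the $\max_p$ pull out as well. Applying these two interchanges (the max first, then the min) to $\E_y[\max_{p(\cdot \mid y)} \min_{q(\cdot \mid y)} \E_p[-\log q(x \mid y)]]$ yields precisely the claimed right-hand side $\max_p \min_q \E_y[\E_p[-\log q(x \mid y)]]$, completing the chain of equalities.

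The main obstacle is making these ``optimize each slice independently'' interchanges rigorous, which is the standard fact that an additive objective over a product feasible set is optimized coordinatewise. For discrete or finite $y$ this is immediate; for continuous $y$ one needs a measurable selection of the per-$y$ optimizers so that the assembled conditional is itself a valid element of $\mathcal{P}_{x \mid y}$, but this is routine and I would confine it to a remark. Notably, no convexity or saddle-point theorem beyond the Lemma is required, since the delicate minimax equality is used only pointwise in $y$.
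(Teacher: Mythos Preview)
The paper does not actually supply a proof of this corollary: it is stated as an ``immediate corollary'' of the preceding Lemma and attributed to \citet{grunwald2004game, ziebart2011maximum}, with no argument given. So there is no in-paper proof to compare against.

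Your argument is correct. The decoupling you use---that optimizing an additive objective over a product feasible set can be done coordinatewise---is precisely why the paper calls this ``immediate,'' and your treatment of it (including the caveat about measurable selection for continuous $y$) is appropriate. One small simplification: you do not actually need to pull the $\max_p$ inside $\E_y$ and then back out. It suffices to fix $p$, observe that the inner $\min_q$ decouples over $y$ and is achieved at $q(\cdot\mid y)=p(\cdot\mid y)$ (this is exactly the content of the Lemma applied at each $y$, or just Gibbs' inequality), which already gives
\[
\min_{q}\,\E_y\!\left[\E_p[-\log q(x\mid y)]\right]=\E_y\!\left[\mathcal{H}_p[x\mid y]\right]
\]
for every $p$; then take $\max_p$ of both sides. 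This cuts the argument roughly in half while using the same idea.
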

In short, prior work shows that the principle of maximum entropy results minimizes worst-case performance on \emph{prediction} problems that use \emph{log-loss}. Our contribution extends this result to show that MaxEnt RL minimizes worst-case performance on \emph{reinforcement learning} problems for certain classes of reward functions.
\begin{theorem} \label{thm:maxent-robust}
The MaxEnt RL objective for a reward function $r$ is equivalent to the robust-reward control objective for a certain class of reward functions:
\begin{equation*}
    \E_\pi \left[\sum_{t=1}^T r(s_t, a_t) \right] + \mathcal{H}_\pi[a \mid s] = \min_{r \in R_r} \E_\pi \left[\sum_{t=1}^T r(s_t, a_t) \right],
\end{equation*}
where
\begin{equation}
    R_r = \{r'(s, a) \triangleq r(s, a) - \log q(a \mid s) \mid q \in \Pi\}. \label{eq:robust-set}
\end{equation}
\end{theorem}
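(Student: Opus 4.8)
The plan is to work from the right-hand side and collapse the robust objective to a cross-entropy minimization, treating $\pi$ as fixed throughout (the claimed identity is a pointwise equality in $\pi$). First I would substitute the definition of $R_r$ from Equation~\ref{eq:robust-set}, rewriting the minimization over reward functions $r' \in R_r$ as a minimization over policies $q \in \Pi$:
\begin{equation*}
    \min_{r' \in R_r} \E_\pi \left[\sum_{t=1}^T r'(s_t, a_t) \right] = \min_{q \in \Pi} \E_\pi \left[\sum_{t=1}^T \left( r(s_t, a_t) - \log q(a_t \mid s_t) \right) \right].
\end{equation*}
Since the reward term $r(s_t, a_t)$ does not depend on $q$, it pulls out of the minimization, leaving $\E_\pi[\sum_t r(s_t, a_t)] + \min_{q \in \Pi} \E_\pi[\sum_t -\log q(a_t \mid s_t)]$. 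The first summand already matches the first term on the left-hand side, so it remains only to show that the residual minimization equals the conditional action entropy $\mathcal{H}_\pi[a \mid s] = \E_\pi[\sum_t -\log \pi(a_t \mid s_t)]$.

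The key step is to recognize the residual term as a sum of cross-entropies. Writing the trajectory expectation as an expectation over the per-step state-visitation distributions induced by $\pi$, I would observe that the adversary's choice of $q(\cdot \mid s)$ at a given state enters the objective only through the inner term $\E_{a \sim \pi(\cdot \mid s)}[-\log q(a \mid s)]$, which decouples across states. For each visited state this inner term is exactly the cross-entropy between $\pi(\cdot \mid s)$ and $q(\cdot \mid s)$, and Gibbs' inequality (equivalently, non-negativity of $D_{\mathrm{KL}}(\pi(\cdot \mid s) \,\|\, q(\cdot \mid s))$) gives $\E_{a \sim \pi(\cdot \mid s)}[-\log q(a \mid s)] \ge \E_{a \sim \pi(\cdot \mid s)}[-\log \pi(a \mid s)]$, with equality if and only if $q(\cdot \mid s) = \pi(\cdot \mid s)$. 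Because $\Pi$ contains $\pi$ itself, the choice $q = \pi$ is feasible and attains this lower bound simultaneously at every state, so the minimum equals $\mathcal{H}_\pi[a \mid s]$ and the two sides coincide.

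The main obstacle — really the only point requiring care — is justifying that the minimization over the single stationary policy $q$ can be carried out state-by-state even though $q$ is shared across all time steps. I would argue that since the per-state cross-entropies are each independently minimized at $q(\cdot \mid s) = \pi(\cdot \mid s)$, the common minimizer $q = \pi$ optimizes all of them at once, so no conflict arises from $q$ being a single object. A secondary subtlety worth flagging is that $q$ is only pinned down on states in the support of $\pi$'s visitation distribution; its values off-support do not affect the objective, so the minimizer is well-defined on the relevant support and the claimed equality holds exactly.
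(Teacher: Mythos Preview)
Your proposal is correct and follows essentially the same route as the paper: both arguments reduce the identity to showing $\mathcal{H}_\pi[a \mid s] = \min_{q \in \Pi} \E_\pi\!\left[\sum_t -\log q(a_t \mid s_t)\right]$, after which the reward term and the substitution $r' \leftrightarrow q$ are bookkeeping. The only difference is that the paper invokes Corollary~\ref{corollary:entropy} (the Gr\"unwald--Dawid/Ziebart result) for this step, whereas you prove it directly via Gibbs' inequality---which is arguably cleaner, since the theorem is stated pointwise in $\pi$ and your argument establishes exactly that.
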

For completeness, we provide a proof in Appendix~\ref{appendix:maxent-robust-proof}.
We will call the set $R_r$ of reward functions a \textbf{robust set}. As an aside, we note that the $\log q(a \mid s)$ term in the definition of the robust set arises from the fact that we consider MaxEnt RL algorithms using Shannon entropy. MaxEnt RL algorithms using other notions of entropy~\citep{lee2019tsallis, chow2018path} would result in different robust sets (see~\citet{grunwald2004game}). We leave this generalization for future work.

\subsection{A Simple Example}
\label{sec:simple-example}
\begin{figure}[t]
    \centering
    \includegraphics[width=\linewidth]{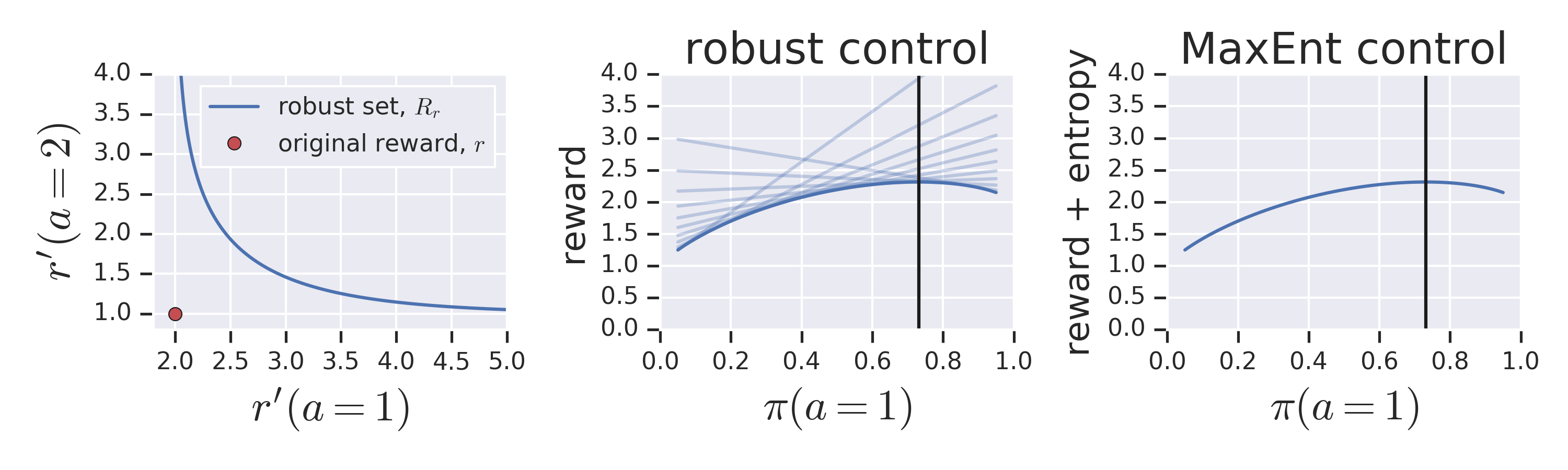}
    \vspace{-2em}
    \caption{\textbf{MaxEnt RL = Robust-Reward Control}: The policy obtained by running MaxEnt RL on reward function $r$ is the optimal robust policy for a collection of rewards, $R_r$. \figleft \; We plot the original reward function, $r$ as a red dot, and the collection of reward functions, $R_r$, as a blue line. \figcenter \; For each policy, parameterized solely by its probability of choosing action 1, we plot the expected reward for each reward function in $R_r$. The robust-reward control problem is to choose the policy whose worst-case reward (dark blue line) is largest. \figright \; For each policy, we plot the MaxEnt RL objective (i.e., the sum of expected reward and entropy).}
    \label{fig:example}
\end{figure}
In Figure~\ref{fig:example}, we consider a simple, 2-armed bandit, with the following reward function:
\begin{equation*}
    r(a) = \begin{cases}
    2 & a =1 \\
    1 & a = 2 \end{cases} .
\end{equation*}
The robust set is then defined as
\begin{equation*}
    R_r = \left\{r'(a) = \begin{cases} 2 - \log q(a) & a = 1 \\ 1 - \log (1 - q(a)) & a = 2 \end{cases} \biggm\vert q \in [0, 1] \right\} .
\end{equation*}
Figure~\ref{fig:example} (left) traces the original reward function and this robust set. Plotting the robust-reward control objective (center) and the MaxEnt RL objective (right), we observe that they are equivalent.

\subsection{A Computational Experiment}
\label{sec:robust-reward-experiment}

\begin{wrapfigure}[18]{r}{0.5\textwidth}
  \vspace{-1em}
  \begin{center}
    \includegraphics[width=\linewidth]{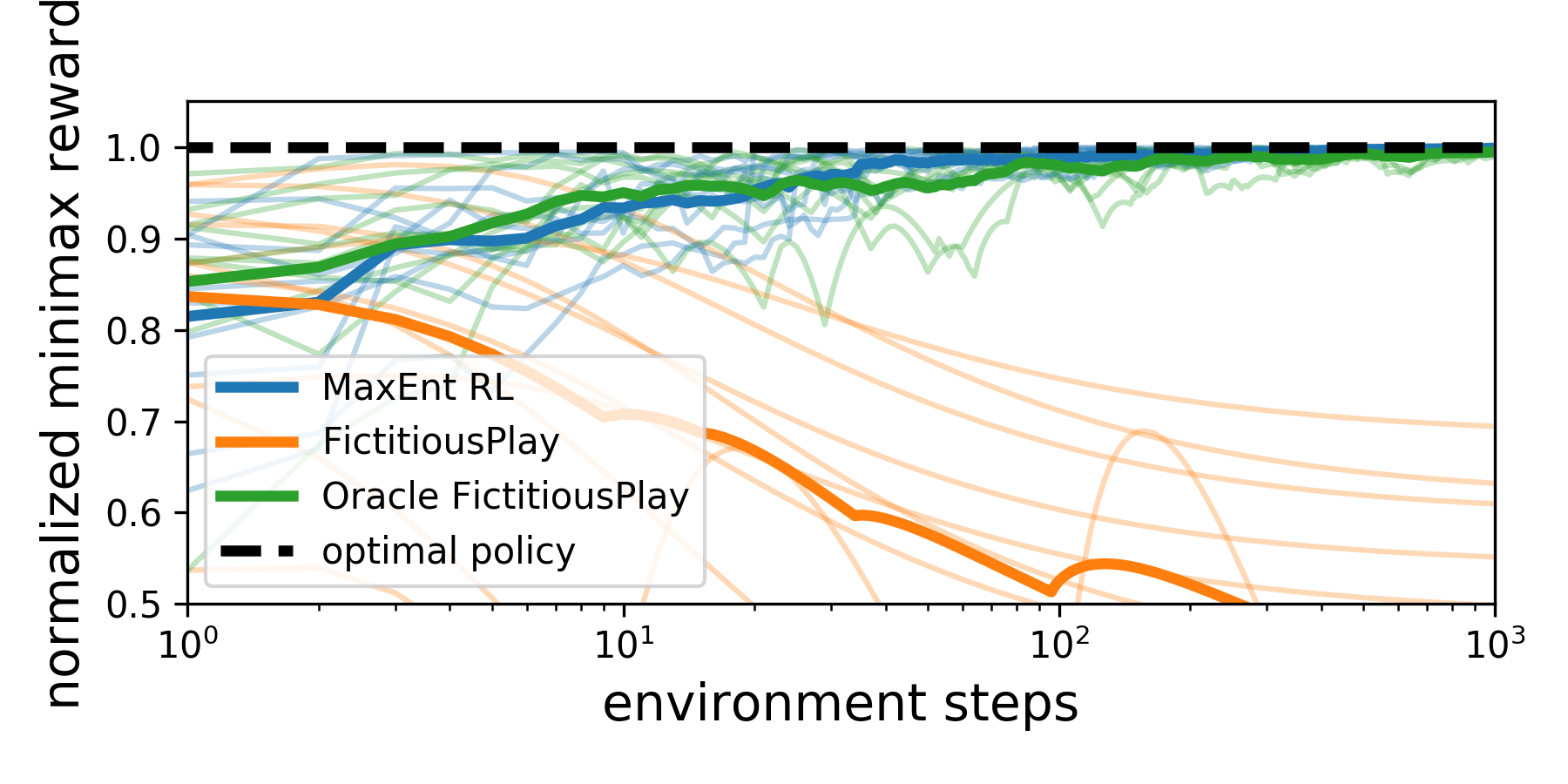}
  \end{center}
  \vspace{-1em}
  \caption{{\footnotesize\textbf{MaxEnt RL solves a robust-reward control problem.} On a 5-armed bandit problem, MaxEnt RL converges to the optimal minimax policy. Fictitious play, a prior method for solving adversarial problems, fails to solve this task, but an oracle variant achieves reward similar to MaxEnt RL. The thick line is the average over 10 random seeds (thin lines). Higher is better.}}
    \label{fig:fictitious-play}
\end{wrapfigure}
We ran an experiment to support our claim that MaxEnt RL is equivalent to solving the robust-reward problem. The mean for arm $i$, $\mu_i$, is drawn from a zero-mean, unit-variance Gaussian distribution, $\mu_i \sim \gN(0, 1)$. When the agent pulled arm $i$, it observes a noisy reward $r_i \sim \gN(\mu_i, 1)$. We implement the MaxEnt RL approaches as in Section~\ref{sec:meta-pomdp-experiment}.
As a baseline, we compare to fictitious play~\citep{brown1951iterative}, an algorithm for solving two-player, zero-sum games. Fictitious play alternates between choosing the best policy w.r.t. the historical average of observes rewards, and choosing the worst reward function for the historical average of policies.
We choose the worst reward function from the robust set (Eq.~\ref{eq:robust-set}). For fair comparison, the policy only observes the (noisy) reward associated with the selected arm.
We also compared to an oracle version of fictitious play that observes the (noisy) rewards associated with all arms, including arms not selected.
We ran each method on the same set of 10 bandit problems, and evaluated the worst-case reward for each method (i.e., the expected reward of the policy, if the reward function were adversarially chosen from the robust set).
Because each problem had a different minimax reward, we normalized the worst-case reward by the worst-case reward of the optimal policy. The normalized rewards are therefore in the interval $[0, 1]$, with 1 being optimal.
Figure~\ref{fig:fictitious-play} plots the normalized reward throughout training. The main result is that MaxEnt RL converges to a policy that achieves optimal minimax reward, supporting our claim that MaxEnt RL is equivalent to a robust-reward control problem. The failure of fictitious play to solve this problem illustrates that the robust-reward control problem is not trivial to solve. Only the oracle version of fictitious play, which makes assumptions not made by MaxEnt RL, is competitive with MaxEnt RL.

\section{Discussion}
In summary, this paper studies connections between MaxEnt RL and control problems with variability in the reward function. While MaxEnt RL is a relatively simple algorithm, the problems that it solves, such as robust-reward control and regret minimization in the meta-POMDP, are typically viewed as quite complex. This result hints that MaxEnt RL might also be used to solve even broader classes of control problems. Our results also have implications for the natural world. The abundance of evidence for probability matching in nature suggests that, in the course of evolution, creatures that better handled uncertainty and avoided adversaries were more likely to survive. We encourage RL researchers to likewise focus their research on problem settings likely to occur in the real world.

\vspace{3em}
\textbf{Acknowledgements}: We thank Ofir Nachum, Brendan O'Donoghue, and Brian Ziebart for their feedback on an early draft. BE is supported by the Fannie and John Hertz Foundation and the National Science Foundation GFRP (DGE 1745016).
Any opinions, findings, and conclusions or recommendations expressed in this material are those of the author(s) and do not necessarily reflect the views of the National Science Foundation.

{\footnotesize
\bibliography{references}

\begin{thebibliography}{62}
\providecommand{\natexlab}[1]{#1}
\providecommand{\url}[1]{\texttt{#1}}
\expandafter\ifx\csname urlstyle\endcsname\relax
  \providecommand{\doi}[1]{doi: #1}\else
  \providecommand{\doi}{doi: \begingroup \urlstyle{rm}\Url}\fi

\bibitem[Abdolmaleki et~al.(2018)Abdolmaleki, Springenberg, Tassa, Munos,
  Heess, and Riedmiller]{abdolmaleki2018maximum}
Abbas Abdolmaleki, Jost~Tobias Springenberg, Yuval Tassa, Remi Munos, Nicolas
  Heess, and Martin Riedmiller.
\newblock Maximum a posteriori policy optimisation.
\newblock \emph{arXiv preprint arXiv:1806.06920}, 2018.

\bibitem[Ahmed et~al.(2018)Ahmed, Roux, Norouzi, and
  Schuurmans]{ahmed2018understanding}
Zafarali Ahmed, Nicolas~Le Roux, Mohammad Norouzi, and Dale Schuurmans.
\newblock Understanding the impact of entropy in policy learning.
\newblock \emph{arXiv preprint arXiv:1811.11214}, 2018.

\bibitem[Andrychowicz et~al.(2017)Andrychowicz, Wolski, Ray, Schneider, Fong,
  Welinder, McGrew, Tobin, Abbeel, and Zaremba]{andrychowicz2017hindsight}
Marcin Andrychowicz, Filip Wolski, Alex Ray, Jonas Schneider, Rachel Fong,
  Peter Welinder, Bob McGrew, Josh Tobin, OpenAI~Pieter Abbeel, and Wojciech
  Zaremba.
\newblock Hindsight experience replay.
\newblock In \emph{Advances in Neural Information Processing Systems}, pp.\
  5048--5058, 2017.

\bibitem[Bitterman et~al.(1958)Bitterman, Wodinsky, and
  Candland]{bitterman1958some}
ME~Bitterman, Jerome Wodinsky, and Douglas~K Candland.
\newblock Some comparative psychology.
\newblock \emph{The American Journal of Psychology}, 71\penalty0 (1):\penalty0
  94--110, 1958.

\bibitem[Brown(1951)]{brown1951iterative}
George~W Brown.
\newblock Iterative solution of games by fictitious play.
\newblock \emph{Activity analysis of production and allocation}, 13\penalty0
  (1):\penalty0 374--376, 1951.

\bibitem[Bullock \& Bitterman(1962)Bullock and
  Bitterman]{bullock1962probability}
Donald~H Bullock and ME~Bitterman.
\newblock Probability-matching in the pigeon.
\newblock \emph{The American Journal of Psychology}, 75\penalty0 (4):\penalty0
  634--639, 1962.

\bibitem[Chow et~al.(2018)Chow, Nachum, and Ghavamzadeh]{chow2018path}
Yinlam Chow, Ofir Nachum, and Mohammad Ghavamzadeh.
\newblock Path consistency learning in tsallis entropy regularized mdps.
\newblock In \emph{International Conference on Machine Learning}, pp.\
  978--987, 2018.

\bibitem[Dayan \& Hinton(1997)Dayan and Hinton]{dayan1997using}
Peter Dayan and Geoffrey~E Hinton.
\newblock Using expectation-maximization for reinforcement learning.
\newblock \emph{Neural Computation}, 9\penalty0 (2):\penalty0 271--278, 1997.

\bibitem[Fox et~al.(2015)Fox, Pakman, and Tishby]{fox2015taming}
Roy Fox, Ari Pakman, and Naftali Tishby.
\newblock Taming the noise in reinforcement learning via soft updates.
\newblock \emph{arXiv preprint arXiv:1512.08562}, 2015.

\bibitem[Gaissmaier \& Schooler(2008)Gaissmaier and
  Schooler]{gaissmaier2008smart}
Wolfgang Gaissmaier and Lael~J Schooler.
\newblock The smart potential behind probability matching.
\newblock \emph{Cognition}, 109\penalty0 (3):\penalty0 416--422, 2008.

\bibitem[Gallistel(1990)]{gallistel1990organization}
Charles~R Gallistel.
\newblock \emph{The organization of learning.}
\newblock The MIT Press, 1990.

\bibitem[Graf et~al.(1964)Graf, Bullock, and Bitterman]{graf1964further}
Virgil Graf, DH~Bullock, and ME~Bitterman.
\newblock Further experiments on probability-matching in the pigeon 1.
\newblock \emph{Journal of the Experimental analysis of Behavior}, 7\penalty0
  (2):\penalty0 151--157, 1964.

\bibitem[Greggers \& Menzel(1993)Greggers and Menzel]{greggers1993memory}
Uwe Greggers and Randolf Menzel.
\newblock Memory dynamics and foraging strategies of honeybees.
\newblock \emph{Behavioral Ecology and Sociobiology}, 32\penalty0 (1):\penalty0
  17--29, 1993.

\bibitem[Gr{\"u}nwald et~al.(2004)Gr{\"u}nwald, Dawid,
  et~al.]{grunwald2004game}
Peter~D Gr{\"u}nwald, A~Philip Dawid, et~al.
\newblock Game theory, maximum entropy, minimum discrepancy and robust bayesian
  decision theory.
\newblock \emph{the Annals of Statistics}, 32\penalty0 (4):\penalty0
  1367--1433, 2004.

\bibitem[Gut(2013)]{gut2013probability}
Allan Gut.
\newblock \emph{Probability: a graduate course}, volume~75.
\newblock Springer Science \& Business Media, 2013.

\bibitem[Haarnoja et~al.(2018{\natexlab{a}})Haarnoja, Pong, Zhou, Dalal,
  Abbeel, and Levine]{haarnoja2018composable}
Tuomas Haarnoja, Vitchyr Pong, Aurick Zhou, Murtaza Dalal, Pieter Abbeel, and
  Sergey Levine.
\newblock Composable deep reinforcement learning for robotic manipulation.
\newblock In \emph{2018 IEEE International Conference on Robotics and
  Automation (ICRA)}, pp.\  6244--6251. IEEE, 2018{\natexlab{a}}.

\bibitem[Haarnoja et~al.(2018{\natexlab{b}})Haarnoja, Zhou, Abbeel, and
  Levine]{haarnoja2018soft}
Tuomas Haarnoja, Aurick Zhou, Pieter Abbeel, and Sergey Levine.
\newblock Soft actor-critic: Off-policy maximum entropy deep reinforcement
  learning with a stochastic actor.
\newblock \emph{arXiv preprint arXiv:1801.01290}, 2018{\natexlab{b}}.

\bibitem[Haarnoja et~al.(2018{\natexlab{c}})Haarnoja, Zhou, Ha, Tan, Tucker,
  and Levine]{haarnoja2018learning}
Tuomas Haarnoja, Aurick Zhou, Sehoon Ha, Jie Tan, George Tucker, and Sergey
  Levine.
\newblock Learning to walk via deep reinforcement learning.
\newblock \emph{arXiv preprint arXiv:1812.11103}, 2018{\natexlab{c}}.

\bibitem[Harper(1982)]{harper1982competitive}
DGC Harper.
\newblock Competitive foraging in mallards: ``ideal free'' ducks.
\newblock \emph{Animal Behaviour}, 30\penalty0 (2):\penalty0 575--584, 1982.

\bibitem[Heess et~al.(2015)Heess, Wayne, Silver, Lillicrap, Erez, and
  Tassa]{heess2015learning}
Nicolas Heess, Gregory Wayne, David Silver, Timothy Lillicrap, Tom Erez, and
  Yuval Tassa.
\newblock Learning continuous control policies by stochastic value gradients.
\newblock In \emph{Advances in Neural Information Processing Systems}, pp.\
  2944--2952, 2015.

\bibitem[Ilyas et~al.(2019)Ilyas, Santurkar, Tsipras, Engstrom, Tran, and
  Madry]{ilyas2019adversarial}
Andrew Ilyas, Shibani Santurkar, Dimitris Tsipras, Logan Engstrom, Brandon
  Tran, and Aleksander Madry.
\newblock Adversarial examples are not bugs, they are features.
\newblock \emph{arXiv preprint arXiv:1905.02175}, 2019.

\bibitem[Kaelbling(1993)]{kaelbling1993learning}
Leslie~Pack Kaelbling.
\newblock Learning to achieve goals.
\newblock In \emph{IJCAI}, pp.\  1094--1099. Citeseer, 1993.

\bibitem[Kappen(2005)]{kappen2005path}
Hilbert~J Kappen.
\newblock Path integrals and symmetry breaking for optimal control theory.
\newblock \emph{Journal of statistical mechanics: theory and experiment},
  2005\penalty0 (11):\penalty0 P11011, 2005.

\bibitem[Lamb \& Ollason(1993)Lamb and Ollason]{lamb1993foraging}
AE~Lamb and JG~Ollason.
\newblock Foraging wood-ants formica aquilonia yarrow (hymenoptera: Formicidae)
  tend to adopt the ideal free distribution.
\newblock \emph{Behavioural processes}, 28\penalty0 (3):\penalty0 189--198,
  1993.

\bibitem[Lee et~al.(2019{\natexlab{a}})Lee, Kim, Lim, Choi, and
  Oh]{lee2019tsallis}
Kyungjae Lee, Sungyub Kim, Sungbin Lim, Sungjoon Choi, and Songhwai Oh.
\newblock Tsallis reinforcement learning: A unified framework for maximum
  entropy reinforcement learning.
\newblock \emph{arXiv preprint arXiv:1902.00137}, 2019{\natexlab{a}}.

\bibitem[Lee et~al.(2019{\natexlab{b}})Lee, Eysenbach, Parisotto, Xing, Levine,
  and Salakhutdinov]{lee2019efficient}
Lisa Lee, Benjamin Eysenbach, Emilio Parisotto, Eric Xing, Sergey Levine, and
  Ruslan Salakhutdinov.
\newblock Efficient exploration via state marginal matching.
\newblock \emph{arXiv preprint arXiv:1906.05274}, 2019{\natexlab{b}}.

\bibitem[Levine(2018)]{levine2018reinforcement}
Sergey Levine.
\newblock Reinforcement learning and control as probabilistic inference:
  Tutorial and review.
\newblock \emph{arXiv preprint arXiv:1805.00909}, 2018.

\bibitem[Lillicrap et~al.(2015)Lillicrap, Hunt, Pritzel, Heess, Erez, Tassa,
  Silver, and Wierstra]{lillicrap2015continuous}
Timothy~P Lillicrap, Jonathan~J Hunt, Alexander Pritzel, Nicolas Heess, Tom
  Erez, Yuval Tassa, David Silver, and Daan Wierstra.
\newblock Continuous control with deep reinforcement learning.
\newblock \emph{arXiv preprint arXiv:1509.02971}, 2015.

\bibitem[Liu \& Todorov(2007)Liu and Todorov]{liu2007evidence}
Dan Liu and Emanuel Todorov.
\newblock Evidence for the flexible sensorimotor strategies predicted by
  optimal feedback control.
\newblock \emph{Journal of Neuroscience}, 27\penalty0 (35):\penalty0
  9354--9368, 2007.

\bibitem[Mihatsch \& Neuneier(2002)Mihatsch and Neuneier]{mihatsch2002risk}
Oliver Mihatsch and Ralph Neuneier.
\newblock Risk-sensitive reinforcement learning.
\newblock \emph{Machine learning}, 49\penalty0 (2-3):\penalty0 267--290, 2002.

\bibitem[Mnih et~al.(2015)Mnih, Kavukcuoglu, Silver, Rusu, Veness, Bellemare,
  Graves, Riedmiller, Fidjeland, Ostrovski, et~al.]{mnih2015human}
Volodymyr Mnih, Koray Kavukcuoglu, David Silver, Andrei~A Rusu, Joel Veness,
  Marc~G Bellemare, Alex Graves, Martin Riedmiller, Andreas~K Fidjeland, Georg
  Ostrovski, et~al.
\newblock Human-level control through deep reinforcement learning.
\newblock \emph{Nature}, 518\penalty0 (7540):\penalty0 529, 2015.

\bibitem[Nachum et~al.(2017)Nachum, Norouzi, Xu, and
  Schuurmans]{nachum2017bridging}
Ofir Nachum, Mohammad Norouzi, Kelvin Xu, and Dale Schuurmans.
\newblock Bridging the gap between value and policy based reinforcement
  learning.
\newblock In \emph{Advances in Neural Information Processing Systems}, pp.\
  2775--2785, 2017.

\bibitem[Nash et~al.(1950)]{nash1950equilibrium}
John~F Nash et~al.
\newblock Equilibrium points in n-person games.
\newblock \emph{Proceedings of the national academy of sciences}, 36\penalty0
  (1):\penalty0 48--49, 1950.

\bibitem[Neumann et~al.(2011)]{neumann2011variational}
Gerhard Neumann et~al.
\newblock Variational inference for policy search in changing situations.
\newblock In \emph{Proceedings of the 28th International Conference on Machine
  Learning, ICML 2011}, pp.\  817--824, 2011.

\bibitem[Norouzi et~al.(2016)Norouzi, Bengio, Jaitly, Schuster, Wu, Schuurmans,
  et~al.]{norouzi2016reward}
Mohammad Norouzi, Samy Bengio, Navdeep Jaitly, Mike Schuster, Yonghui Wu, Dale
  Schuurmans, et~al.
\newblock Reward augmented maximum likelihood for neural structured prediction.
\newblock In \emph{Advances In Neural Information Processing Systems}, pp.\
  1723--1731, 2016.

\bibitem[O'Donoghue(2018)]{o2018variational}
Brendan O'Donoghue.
\newblock Variational bayesian reinforcement learning with regret bounds.
\newblock \emph{arXiv preprint arXiv:1807.09647}, 2018.

\bibitem[Pinto et~al.(2017)Pinto, Davidson, Sukthankar, and
  Gupta]{pinto2017robust}
Lerrel Pinto, James Davidson, Rahul Sukthankar, and Abhinav Gupta.
\newblock Robust adversarial reinforcement learning.
\newblock In \emph{Proceedings of the 34th International Conference on Machine
  Learning-Volume 70}, pp.\  2817--2826. JMLR. org, 2017.

\bibitem[Pong et~al.(2019)Pong, Dalal, Lin, Nair, Bahl, and
  Levine]{pong2019skew}
Vitchyr~H Pong, Murtaza Dalal, Steven Lin, Ashvin Nair, Shikhar Bahl, and
  Sergey Levine.
\newblock Skew-fit: State-covering self-supervised reinforcement learning.
\newblock \emph{arXiv preprint arXiv:1903.03698}, 2019.

\bibitem[Puterman(2014)]{puterman2014markov}
Martin~L Puterman.
\newblock \emph{Markov Decision Processes.: Discrete Stochastic Dynamic
  Programming}.
\newblock John Wiley \& Sons, 2014.

\bibitem[Rawlik et~al.(2013)Rawlik, Toussaint, and
  Vijayakumar]{rawlik2013stochastic}
Konrad Rawlik, Marc Toussaint, and Sethu Vijayakumar.
\newblock On stochastic optimal control and reinforcement learning by
  approximate inference.
\newblock In \emph{Twenty-Third International Joint Conference on Artificial
  Intelligence}, 2013.

\bibitem[Russell \& Norvig(2016)Russell and Norvig]{russell2016artificial}
Stuart~J Russell and Peter Norvig.
\newblock \emph{Artificial intelligence: a modern approach}.
\newblock Malaysia; Pearson Education Limited,, 2016.

\bibitem[Sakai \& Fukai(2008)Sakai and Fukai]{sakai2008actor}
Yutaka Sakai and Tomoki Fukai.
\newblock The actor-critic learning is behind the matching law: Matching versus
  optimal behaviors.
\newblock \emph{Neural Computation}, 20\penalty0 (1):\penalty0 227--251, 2008.

\bibitem[Schaul et~al.(2015)Schaul, Horgan, Gregor, and
  Silver]{schaul2015universal}
Tom Schaul, Daniel Horgan, Karol Gregor, and David Silver.
\newblock Universal value function approximators.
\newblock In \emph{International Conference on Machine Learning}, pp.\
  1312--1320, 2015.

\bibitem[Scott(2004)]{scott2004optimal}
Stephen~H Scott.
\newblock Optimal feedback control and the neural basis of volitional motor
  control.
\newblock \emph{Nature Reviews Neuroscience}, 5\penalty0 (7):\penalty0 532,
  2004.

\bibitem[Simic(2009)]{simic2009upper}
Slavko Simic.
\newblock On an upper bound for jensen’s inequality.
\newblock \emph{Journal of Inequalities in Pure and Applied Mathematics},
  10\penalty0 (2):\penalty0 5, 2009.

\bibitem[Singh et~al.(2019)Singh, Yang, Hartikainen, Finn, and
  Levine]{singh2019end}
Avi Singh, Larry Yang, Kristian Hartikainen, Chelsea Finn, and Sergey Levine.
\newblock End-to-end robotic reinforcement learning without reward engineering.
\newblock \emph{arXiv preprint arXiv:1904.07854}, 2019.

\bibitem[Stephens \& Krebs(1986)Stephens and Krebs]{stephens1986foraging}
David~W Stephens and John~R Krebs.
\newblock \emph{Foraging theory}.
\newblock Princeton University Press, 1986.

\bibitem[Stevenson \& Odom(1964)Stevenson and Odom]{stevenson1964children}
Harold~W Stevenson and Richard~D Odom.
\newblock Children's behavior in a probabilistic situation.
\newblock \emph{Journal of experimental psychology}, 68\penalty0 (3):\penalty0
  260, 1964.

\bibitem[Theodorou et~al.(2010)Theodorou, Buchli, and
  Schaal]{theodorou2010generalized}
Evangelos Theodorou, Jonas Buchli, and Stefan Schaal.
\newblock A generalized path integral control approach to reinforcement
  learning.
\newblock \emph{journal of machine learning research}, 11\penalty0
  (Nov):\penalty0 3137--3181, 2010.

\bibitem[Thompson(1933)]{thompson1933likelihood}
William~R Thompson.
\newblock On the likelihood that one unknown probability exceeds another in
  view of the evidence of two samples.
\newblock \emph{Biometrika}, 25\penalty0 (3/4):\penalty0 285--294, 1933.

\bibitem[Thrun \& Pratt(2012)Thrun and Pratt]{thrun2012learning}
Sebastian Thrun and Lorien Pratt.
\newblock \emph{Learning to learn}.
\newblock Springer Science \& Business Media, 2012.

\bibitem[Todorov(2007)]{todorov2007linearly}
Emanuel Todorov.
\newblock Linearly-solvable markov decision problems.
\newblock In \emph{Advances in neural information processing systems}, pp.\
  1369--1376, 2007.

\bibitem[Todorov \& Jordan(2002)Todorov and Jordan]{todorov2002optimal}
Emanuel Todorov and Michael~I Jordan.
\newblock Optimal feedback control as a theory of motor coordination.
\newblock \emph{Nature neuroscience}, 5\penalty0 (11):\penalty0 1226, 2002.

\bibitem[Toussaint(2009)]{toussaint2009robot}
Marc Toussaint.
\newblock Robot trajectory optimization using approximate inference.
\newblock In \emph{Proceedings of the 26th annual international conference on
  machine learning}, pp.\  1049--1056. ACM, 2009.

\bibitem[Vulkan(2000)]{vulkan2000economist}
Nir Vulkan.
\newblock An economist’s perspective on probability matching.
\newblock \emph{Journal of economic surveys}, 14\penalty0 (1):\penalty0
  101--118, 2000.

\bibitem[Warde-Farley et~al.(2018)Warde-Farley, Van~de Wiele, Kulkarni,
  Ionescu, Hansen, and Mnih]{warde2018unsupervised}
David Warde-Farley, Tom Van~de Wiele, Tejas Kulkarni, Catalin Ionescu, Steven
  Hansen, and Volodymyr Mnih.
\newblock Unsupervised control through non-parametric discriminative rewards.
\newblock \emph{arXiv preprint arXiv:1811.11359}, 2018.

\bibitem[Weir(1964)]{weir1964developmental}
Morton~W Weir.
\newblock Developmental changes in problem-solving strategies.
\newblock \emph{Psychological review}, 71\penalty0 (6):\penalty0 473, 1964.

\bibitem[Williams \& Peng(1991)Williams and Peng]{williams1991function}
Ronald~J Williams and Jing Peng.
\newblock Function optimization using connectionist reinforcement learning
  algorithms.
\newblock \emph{Connection Science}, 3\penalty0 (3):\penalty0 241--268, 1991.

\bibitem[Wozny et~al.(2010)Wozny, Beierholm, and Shams]{wozny2010probability}
David~R Wozny, Ulrik~R Beierholm, and Ladan Shams.
\newblock Probability matching as a computational strategy used in perception.
\newblock \emph{PLoS computational biology}, 6\penalty0 (8):\penalty0 e1000871,
  2010.

\bibitem[Zhou \& Doyle(1998)Zhou and Doyle]{zhou1998essentials}
Kemin Zhou and John~Comstock Doyle.
\newblock \emph{Essentials of robust control}, volume 104.
\newblock Prentice hall Upper Saddle River, NJ, 1998.

\bibitem[Ziebart(2010)]{ziebart2010modeling}
Brian~D Ziebart.
\newblock Modeling purposeful adaptive behavior with the principle of maximum
  causal entropy.
\newblock 2010.

\bibitem[Ziebart et~al.(2011)Ziebart, Bagnell, and Dey]{ziebart2011maximum}
Brian~D Ziebart, J~Andrew Bagnell, and Anind~K Dey.
\newblock Maximum causal entropy correlated equilibria for markov games.
\newblock In \emph{The 10th International Conference on Autonomous Agents and
  Multiagent Systems-Volume 1}, pp.\  207--214. International Foundation for
  Autonomous Agents and Multiagent Systems, 2011.

\end{thebibliography}
\bibliographystyle{apalike}
}
\appendix

\section{Meta-POMDP}

\subsection{Solving for the Optimal Trajectory Distribution}
\label{appendix:lagrange}

We solve the optimization problem introduced in Section~\ref{sec:solving-meta-pomdp}. The Lagrangian is
\begin{equation*}
    \gL(\pi, \lambda) \triangleq \int \frac{p(\tau)}{\pi(\tau)} d \tau + \lambda \left( \int \pi(\tau) d\tau - 1 \right) .
\end{equation*}
The first and second derivatives of the Lagrangian are:
\begin{equation*}
    \frac{d \gL}{d \pi} = -\frac{p(\tau)}{\pi(\tau)^2} - \lambda \qquad \frac{d^2 \gL}{d \pi^2} = 2\frac{p(\tau)}{\pi(\tau)^3} > 0 .
\end{equation*}
Note that the second derivative is positive so setting the first derivative equal to zero will provide a minimum of the objective:
\begin{equation*}
    \frac{d \gL}{d \pi} = 0 \implies \pi(\tau) = \sqrt{-\lambda p(\tau)} .
\end{equation*}
We then solve for $\lambda$ using the constraint that $\pi(\tau)$ integrate to one, yielding the solution to the optimization problem:
\begin{equation*}
    \pi(\tau) = \frac{\sqrt{p(\tau)}}{\int \sqrt{p(\tau')} d\tau'}.
\end{equation*}

\subsection{The Meta-POMDP as (Memoryless) Meta-Learning}
\label{sec:memoryless-meta-learning}
The meta-POMDP can be viewed as a meta-learning problem~\citep{thrun2012learning}, solved with a memoryless meta-learner. Formally, a meta-learning algorithm is a distribution over policies $\pi$, given the observed data, $\gD$: $p(\pi \mid \gD)$. We will consider memoryless meta-learning algorithms: the distribution of policies proposed at each step is the same, as the meta-learner cannot update its beliefs based on observed evidence: $p(\pi \mid \gD) = p(\pi)$.
We define a meta-learning problem by a distribution over MDPs, $p(\gM)$. A unknown MDP $\gM$ will be sampled, and the job of the meta-learning algorithm is to solve $\gM$ as quickly as possible. Solving an MDP can mean a number of different things: reaching a goal state, achieving a certain level of reward, or avoiding episode termination. For simplicity, we assume that each MDP has a single successful policy, and each policy is successful for a single MDP, though this analysis can likely be extended to more general notions of success. We will define $\pi_i$ as the unique policy that solves MDP $\gM_i$.
Note that, in this setting, the regret of the meta-POMDP is exactly the number of episodes required to find the optimal policy for the (unknown) MDP.
Lemma~\ref{lemma:meta-pomdp-traj} tells us that the optimal (memoryless) meta-learning algorithm is defined as $p(\pi_i) \propto \sqrt{p(\gM_i)}$.

While the assumption that the meta-learning algorithm is independent of observed data is admittedly strong, it is realistic in settings where failing at one task provides no information about that the true task might be. Broadly, we believe that the meta-POMDP is a first step towards understanding if and how MaxEnt RL might be used to solve problems typically approached as meta-learning problems.

\subsection{When Can Trajectory-Level Rewards be Decomposed?}
\label{sec:decompose}
\begin{lemma} \label{lemma:decompose-reward}
Let a trajectory-level reward function $r_\tau(\tau)$ be given, and define the corresponding target distribution as
\begin{equation*}
p_{r_\tau}(\tau) \propto p_1(s_1) e^{r_\tau(\tau)} \prod_{t=1}^T p(s_{t+1} \mid s_t, a_t).
\end{equation*}
If there exists a Markovian policy $\pi$ such that $\pi(\tau) = p_{r_\tau}(\tau)$ for all trajectories $\tau$, then there exists a state-action level reward function $r(s, a)$ satisfying
\begin{equation*}
    r_\tau(\tau) = \sum_{t=1}^T r(s_t, a_t) \qquad \forall \tau,
\end{equation*}
\end{lemma}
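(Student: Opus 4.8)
The plan is to exploit the factorized structure of a Markovian policy's trajectory distribution and match it term-by-term against the target distribution $p_{r_\tau}$. First I would write both sides of the hypothesized equality $\pi(\tau) = p_{r_\tau}(\tau)$ in their explicit product forms. Writing $Z \triangleq \int p_1(s_1') e^{r_\tau(\tau')} \prod_{t=1}^T p(s_{t+1}' \mid s_t', a_t')\, d\tau'$ for the normalizing constant of the target, the equality reads
\begin{equation*}
p_1(s_1) \prod_{t=1}^T p(s_{t+1}\mid s_t,a_t)\,\pi(a_t\mid s_t) = \frac{1}{Z}\, p_1(s_1)\, e^{r_\tau(\tau)} \prod_{t=1}^T p(s_{t+1}\mid s_t,a_t).
\end{equation*}

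Next, on any trajectory that is \emph{feasible} (i.e. with $p_1(s_1)>0$ and every transition $p(s_{t+1}\mid s_t,a_t)>0$), I would cancel the common initial-state and dynamics factors from both sides. This is precisely where the Markovian assumption is used: it lets the policy's trajectory distribution factor into per-step terms $\pi(a_t\mid s_t)$ that line up against, and cancel with, the dynamics factors, leaving $\prod_{t=1}^T \pi(a_t\mid s_t) = e^{r_\tau(\tau)}/Z$. Taking logarithms — which are finite because a feasible trajectory with finite $r_\tau(\tau)$ has strictly positive target probability, forcing $\pi(a_t\mid s_t)>0$ all along it — yields
\begin{equation*}
r_\tau(\tau) = \sum_{t=1}^T \log \pi(a_t\mid s_t) + \log Z.
\end{equation*}

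Finally I would read off the desired state-action reward directly. Defining $r(s,a)\triangleq \log\pi(a\mid s) + \tfrac{1}{T}\log Z$ distributes the single global constant $\log Z$ evenly across the $T$ steps, so that $\sum_{t=1}^T r(s_t,a_t) = \sum_{t=1}^T \log\pi(a_t\mid s_t) + \log Z = r_\tau(\tau)$, as required. Each summand depends only on the corresponding pair $(s_t,a_t)$, so $r$ is a genuine state-action reward function.

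I expect the only real subtlety — more a point requiring care than a genuine difficulty — to be the support/feasibility bookkeeping: justifying that the cancellation of dynamics factors is legitimate and that all logarithms are finite. On infeasible trajectories both sides of the original identity vanish regardless of $r_\tau$, so $r_\tau$ is unconstrained there; the claimed decomposition is therefore only meaningful, and only needed, on feasible trajectories, which is exactly what is required for the MaxEnt RL equivalence invoked in Lemma~\ref{lemma:meta-pomdp-traj}. The conceptual crux is recognizing that the Markovian factorization is what permits the clean term-by-term cancellation; without it, the per-step log-policy terms would depend on the full history and could not be rewritten as a function of $(s_t,a_t)$ alone.
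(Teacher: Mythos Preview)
Your proposal is correct and follows essentially the same route as the paper: write out both trajectory distributions, cancel the shared initial-state and dynamics factors, take logarithms, and set $r(s,a)=\log\pi(a\mid s)+\tfrac{1}{T}\log Z$ (the paper's constant $c'$ is your $Z$). Your added remarks on feasibility and finiteness of logarithms are extra care the paper omits, but the argument is otherwise identical.
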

\begin{proof}
To start, we recall the definitions of $\pi(\tau)$ and $p_{r_\tau}(\tau)$:
\begin{align}
    \pi(\tau) & \propto p_1(s_1) \prod_{t=1}^T p(s_{t+1} \mid s_t, a_t) \pi(a_t \mid s_t) \label{eq:meta-pomdp-1} \\
    p_{r_\tau}(\tau) & \propto p_1(s_1) e^{r_\tau(\tau)} \prod_{t=1}^T p(s_{t+1} \mid s_t, a_t) . \label{eq:meta-pomdp-2}
\end{align}
By our assumption that $\pi(\tau) = p_{r_\tau}(\tau)$, we know that Equations~\ref{eq:meta-pomdp-1} and~\ref{eq:meta-pomdp-2} are equal, up to some proportionality constant $c'$:
\begin{align*}
    p_1(s_1) e^{r_\tau(\tau)} \prod_{t=1}^T p(s_{t+1} \mid s_t, a_t) &= c' p_1(s_1) \prod_{t=1}^T p(s_{t+1} \mid s_t, a_t) \pi(a_t \mid s_t) \\
    &= p_1(s_1) e^{\log c' + \sum_{t=1}^T \log \pi(a_t \mid s_t)} \prod_{t=1}^T p(s_{t+1} \mid s_t, a_t) \\
    &= p_1(s_1) e^{\sum_{t=1}^T r(s_t, a_t)} \prod_{t=1}^T p(s_{t+1} \mid s_t, a_t) ,
\end{align*}
where $r(s, a) \triangleq \log \pi(a \mid s) + \frac{1}{T} \log c'$.
\end{proof}

\subsection{When Does a Solution Exist?}
\label{appendix:exists-matching-policy}
When considering goal-reaching meta-POMDPs, we made an assumption that there exists a policy that exactly matches some distribution ($\sqrt{\tilde{p}(s_T, a_T)}$). Here, we provide a sufficient (but not necessary) condition for the existence of such a policy
\begin{lemma}
Let $p(\tau) = \tilde{p}(s_T, a_T)$ be some distribution over trajectories that depends only on the last state and action in each trajectory (as in Lemma~\ref{lemma:meta-pomdp-state}). If, for every state $s_T$  and action $a_T$ where $\tilde{p}(s_T, a_T) > 0$, there exists a policy $\pi_{s_T, a_T}$ that deterministically reaches state $s_T$ and action $a_T$ on the final time step, then there exists a Markovian policy satisfying $\pi(\tau) = p(\tau)$.
\end{lemma}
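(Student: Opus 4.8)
The plan is to reduce the claim to a standard occupancy-measure argument. The only feature of the target $p$ that a policy can be required to reproduce is its terminal state--action marginal, since $p(\tau)=\tilde{p}(s_T,a_T)$ depends on $\tau$ only through $(s_T,a_T)$; write $\rho(s,a)$ for this marginal, i.e.\ the (normalized) projection of $p$ onto the last transition. This is exactly the quantity whose attainability Lemma~\ref{lemma:meta-pomdp-state} assumes, so it suffices to exhibit a Markovian policy $\pi$ with $\rho_\pi^T(s,a)=\rho(s,a)$. The obstacle is that the hypothesis hands us a \emph{family} of goal-conditioned policies $\{\pi_{s,a}\}$, one per reachable terminal pair, whereas a Markovian policy must commit to a single (possibly randomized) action rule per state and time step; a mixture over the $\pi_{s,a}$ is in general history-dependent, not Markovian.

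First I would build the history-dependent mixture policy $\bar\pi$ that, at the start of an episode, samples a target $(s^*,a^*)\sim\rho$ and thereafter follows $\pi_{s^*,a^*}$. Because each $\pi_{s,a}$ reaches its target pair at step $T$ with probability one, the terminal marginal of the mixture factorizes as
\begin{equation*}
\Pr\nolimits_{\bar\pi}(s_T=s,\,a_T=a)=\sum_{(s^*,a^*)}\rho(s^*,a^*)\,\mathbbm{1}\!\left[(s^*,a^*)=(s,a)\right]=\rho(s,a),
\end{equation*}
so $\bar\pi$ already matches the target terminal marginal exactly. The remaining task is to replace $\bar\pi$ by a Markovian policy without disturbing this marginal.

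The second step invokes the classical fact \citep[][Thm.~5.5.1]{puterman2014markov} that for any history-dependent policy there is a Markov policy inducing the same state--action marginal $d_t(s,a)\triangleq\Pr(s_t=s,a_t=a)$ at \emph{every} time step $t$. Concretely, I would set $\pi_t(a\mid s)\triangleq d_t^{\bar\pi}(s,a)\big/\sum_{a'}d_t^{\bar\pi}(s,a')$ wherever the denominator is positive (and arbitrarily elsewhere), and verify by induction on $t$ that this time-inhomogeneous randomized policy reproduces the state marginal $d_t^{\bar\pi}(s)$, and hence $d_t^{\bar\pi}(s,a)=\pi_t(a\mid s)\,d_t^{\bar\pi}(s)$: matching the state marginal at $t$ together with the shared Markovian dynamics $p(s_{t+1}\mid s_t,a_t)$ forces the state marginal at $t+1$ to agree, with the base case supplied by the common initial distribution $p_1(s_1)$. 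Evaluating the induction at $t=T$ gives $\rho_\pi^T(s,a)=d_T^{\bar\pi}(s,a)=\rho(s,a)$, which is the claim.

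I expect the only delicate point to be this non-Markovian-to-Markovian reduction: one must check that $\pi_t(\cdot\mid s)$ is well defined on the support of the occupancy measure and that the inductive step uses \emph{only} the current state--action marginal, which is precisely where the Markov property of the dynamics is essential. Everything upstream of that---projecting $p$ to the marginal $\rho$ and checking that the goal-conditioned mixture attains $\rho$---is routine given the reachability hypothesis.
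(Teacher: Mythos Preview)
Your proposal is correct and follows essentially the same two-step argument as the paper: first build the non-Markovian mixture $\bar\pi$ that samples a target terminal pair and then executes the corresponding goal-reaching policy, yielding the desired terminal marginal; then invoke the standard occupancy-measure result to replace $\bar\pi$ by a Markovian policy with the same state--action marginals. The paper cites \citet[Theorem 2.8]{ziebart2010modeling} for the second step where you cite \citet[Thm.~5.5.1]{puterman2014markov}, and you additionally spell out the explicit construction $\pi_t(a\mid s)=d_t^{\bar\pi}(s,a)/d_t^{\bar\pi}(s)$ with its inductive verification, but the underlying idea is identical.
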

The main idea behind the proof is that, if there exists policies that reach each of the possible target state-action pairs, then there exists a way of ``mixing'' these policies to obtain a policy with the desired marginal distribution.
\begin{proof}
First, we construct a mixture policy $\bar{\pi}$ by sampling $s_T, a_T \sim \tilde{q}(s_T, a_T)$ at the start of each episode, and using policy $\pi_{s_T, a_T}$ for every step in that episode. By construction, we have $\rho_{\bar{\pi}}(s_T, a_T) = q(s_T, a_T)$. However, this policy $\bar{\pi}$ is non-Markovian.
Nonetheless, \citet[Theorem 2.8]{ziebart2010modeling} guarantees that there exists a Markovian policy $\tilde{\pi}$ with the same marginal state distribution: $\tilde{\pi}(s_T, a_T) = \bar{\pi}(s_T, a_T)$. Thus, there exists a Markovian policy, $\tilde{\pi}$ satisfying $\tilde{\pi}(s_T, a_T) = q(s_T, a_T)$.
\end{proof}

\subsection{Bounding the Difference Between MaxEnt RL and the meta-POMDP}
\label{sec:meta-pomdp-bound}
While the result in Section~\ref{sec:uncertainty} shows that MaxEnt RL has the same solution as the meta-POMDP, it does not tells us how the two control problems differ away from their optima. The following theorem provides an answer.
\begin{theorem}
Assume that the ratio $p(\tau) / \pi(\tau)$ is bounded in $[a, b]$ for all trajectories $\tau$ and $a,b > 0$. Further, assume that there exists a policy $\pi_r$ that can solve the MaxEnt RL problem exactly (i.e., $\pi(\tau) = p(\tau)$). Then the MaxEnt RL objective minimizes an upper bound on the log regret of the meta-POMDP, plus an additional term that vanishes as $\pi \rightarrow p_r$:
\begin{equation*}
    J(\pi, p_r) \ge \log \text{Regret}_{p_r^2}(\pi) + C(\pi, r).
\end{equation*}
\end{theorem}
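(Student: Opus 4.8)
The plan is to identify the MaxEnt RL objective $J(\pi, p_r)$ with the reverse KL divergence $D_{\text{KL}}(\pi(\tau)\,\|\,p_r(\tau))$, which by the equivalence recalled in Section~\ref{sec:preliminaries} is the loss that MaxEnt RL minimizes (up to a $\pi$-independent constant I will absorb into $C$), and then to rewrite both sides in terms of the single importance-weight variable $w(\tau) \triangleq p_r(\tau)/\pi(\tau)$. Here I read the hypothesis that $\pi = p$ solves MaxEnt exactly as the statement that the MaxEnt target is $p_r$, with $w$ bounded in $[a,b]$, $a,b>0$. Two elementary identities drive everything: first, $\E_\pi[w] = \int p_r\, d\tau = 1$, which forces $a \le 1 \le b$; second, $\text{Regret}_{p_r^2}(\pi) = \int p_r(\tau)^2/\pi(\tau)\, d\tau = \E_\pi[w^2] = 1 + \Var_\pi(w)$, so that $\log\text{Regret}_{p_r^2}(\pi) = \log(1+\Var_\pi(w))$ while $J(\pi,p_r) = \E_\pi[-\log w]$. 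The theorem thereby reduces to a one-dimensional inequality about a bounded, unit-mean random variable.

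With this reduction in hand, first I would upper-bound the regret side using $\log x \le x-1$, giving $\log\text{Regret}_{p_r^2}(\pi) \le \Var_\pi(w)$. Next I would lower-bound $J$ by a variance term using boundedness: since $(-\log w)'' = 1/w^2 \ge 1/b^2$ on $(0,b]$, the function $-\log w$ lies above its second-order Taylor expansion at $w=1$, i.e. $-\log w \ge (1-w) + \tfrac{1}{2b^2}(1-w)^2$ for all $w\in[a,b]$. Taking $\E_\pi$ and using $\E_\pi[w]=1$ annihilates the linear term and yields $J(\pi,p_r) \ge \tfrac{1}{2b^2}\Var_\pi(w)$. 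Finally I define the residual $C(\pi,r) \triangleq \tfrac{1}{2b^2}\Var_\pi(w) - \log(1+\Var_\pi(w))$, so the chain reads $J(\pi,p_r) \ge \tfrac{1}{2b^2}\Var_\pi(w) = \log\text{Regret}_{p_r^2}(\pi) + C(\pi,r)$; as $\pi \to p_r$ we have $w\to 1$, hence $\Var_\pi(w)\to 0$ and $C(\pi,r)\to 0$, as claimed.

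The main obstacle, and the reason the boundedness hypothesis is indispensable, is a factor-of-two mismatch between the two sides at leading order: a plain Jensen argument gives only $\log\text{Regret}_{p_r^2}(\pi) = \log\E_\pi[w^2] \ge 2\E_\pi[\log w] = -2J$, which points the wrong way, and expanding both sides to second order shows $J \approx \tfrac12\Var_\pi(w)$ against $\log\text{Regret} \approx \Var_\pi(w)$, so no dimension-free inequality $J \ge \log\text{Regret}$ can hold without a correction. The bound $w\le b$ is exactly what upgrades the reverse KL to a genuine quadratic (variance) lower bound with an explicit positive constant, letting that quadratic dominate the $\log(1+\cdot)$ on the regret side; the bound $a>0$ is used only to guarantee that $\E_\pi[-\log w]$ and $\E_\pi[w^2]$ are finite, so that all manipulations are legitimate. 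I would close by noting that the precise constant $1/(2b^2)$ is immaterial---any strictly positive quadratic coefficient valid on $[a,b]$ suffices---and that the normalization of $p_r^2$, being independent of $\pi$, can likewise be folded into $C(\pi,r)$.
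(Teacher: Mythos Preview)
Your argument is correct and reaches the stated conclusion, but by a route quite different from the paper's. The paper proceeds in two external-lemma steps: first it invokes a reverse-Jensen inequality of \citet{simic2009upper} to bound $\log\int p^2/\pi$ above by the \emph{forward} KL $D_{\text{KL}}(p\,\|\,\pi)$ plus the slack $\tfrac14(b-a)(1/a-1/b)$; then it invokes a bound of \citet{norouzi2016reward} to control the forward KL by the reverse KL $D_{\text{KL}}(\pi\,\|\,p)$ plus a squared log-policy-ratio term. Its $C(\pi,r)$ is the sum of those two slacks. You instead reduce everything to the single scalar $w=p_r/\pi$ and use only calculus: $\log(1+x)\le x$ on the regret side, and the convexity bound $-\log w \ge (1-w)+\tfrac{1}{2b^2}(1-w)^2$ on the KL side, with $\Var_\pi(w)$ as the common currency. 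This is more elementary and self-contained, makes the role of the upper bound $b$ completely explicit (it is the constant $1/(2b^2)$), and uses $a>0$ only for integrability---whereas the paper's reverse-Jensen slack depends on both $a$ and $b$. The paper's route, by contrast, yields a $C$ with a more interpretable structure (a squared difference of log-policies) and connects the result to existing bounds in the literature. One small caution: your aside about folding $\pi$-independent constants (from the MaxEnt/KL identification, and from normalizing $p_r^2$) into $C$ would, if taken literally, prevent $C\to 0$; but your displayed $C=\tfrac{1}{2b^2}\Var_\pi(w)-\log(1+\Var_\pi(w))$ contains no such constants, so the argument stands once $J$ is read as $D_{\text{KL}}(\pi\,\|\,p_r)$ and $\text{Regret}_{p_r^2}$ as the unnormalized integral $\int p_r^2/\pi$, which are the conventions the paper's own proof effectively adopts.
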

Before proving this theorem, we note that the assumption that the MaxEnt RL problem can be solved exactly is always satisfied for linearly-solveable MDPs~\citep{todorov2007linearly}. Moreover, given a MDP that cannot be solved exactly, we can always modify the reward function (i.e., the target distribution $p(\tau)$) such that the optimal policy remains the same, but such that the optimal policy now exactly matches the target distribution. 
The proof of the theorem will consist of two steps. First, we will bound the difference between the log regret of the meta-POMDP and a \emph{forward KL}. The second step is to bound the difference between that forward KL and the reverse KL (which is optimized by MaxEnt RL). 
\begin{proof}
To start, we apply a ``backwards'' version of Jensen's inequality from ~\citet[Theorem 1.1]{simic2009upper}, which states that the following inequality holds for any convex function $f(x)$:
\begin{equation*}
   \int p(\tau) f(x_\tau) d \tau - f \int p(\tau) x_\tau d \tau \le \frac{1}{4}(b - a)(f'(b) - f'(a)).
\end{equation*}
We use $-\log(x)$ as our convex function, whose derivative is $\frac{-1}{x}$, and further define $x_\tau = p(\tau) / \pi(\tau)$:
\begin{equation*}
   -\int p(\tau) \log \left( \frac{p(\tau)}{\pi(\tau)} \right) d \tau + \log \int \frac{p^2(\tau)}{\pi(\tau)} d \tau \le \frac{1}{4}(b - a)(1/a - 1/b) .
\end{equation*}
Rearranging terms, we get
\begin{equation}
    \log \int \frac{p^2(\tau)}{\pi(\tau)} d \tau \le \int p(\tau) \log \left( \frac{p(\tau)}{\pi(\tau)} \right) d \tau + \frac{1}{4}(b - a)(1/a - 1/b). \label{eq:bound-1}
\end{equation}
The LHS is not quite a log regret (Eq.~\ref{eq:regret}) because it contains a $p^2(\tau)$ term in the numerator, which is not a proper probability distribution. Defining $Z = \int p^2(\tau)$ as the normalizing constant (which does not depend on $\pi$), we can write the log regret using a proper distribution:
\begin{equation*}
    \log \text{Regret}_{\frac{1}{Z}p^2}(\pi) = \log \int \frac{p^2(\tau)}{Z\pi(\tau)} d \tau = \log \int \frac{p^2(\tau)}{\pi(\tau)} d \tau - \log Z
\end{equation*}
We can now rewrite Equation~\ref{eq:bound-1} as a bound on log regret:
\begin{equation}
    \log \text{Regret}_{\frac{1}{Z}p^2}(\pi) \le \int p(\tau) \log \left( \frac{p(\tau)}{\pi(\tau)} \right) d \tau + \frac{1}{4}(b - a)(1/a - 1/b) + \log Z. \label{eq:bound-2}
\end{equation}
Notice that the integral on the RHS is the forward KL between $p$ and $\pi$, whereas MaxEnt RL minimizes the reverse KL. Our next step is to show that the forward KL is not too much larger than the reverse KL, so optimizing the reverse KL (as done by MaxEnt RL) will still minimize an upper bound on the log regret of the meta-POMDP. We will do this using a result from~\citet{norouzi2016reward}. First, we need to define the logits corresponding to distributions $\pi$ and $p$.
We start by defining the logits for just the dynamics:
\begin{equation*}
    \ell_d(\tau) \triangleq \log p_1(s_1) + \sum_{t=1}^T \log p(s_{t+1} \mid s_t, a_t) .
\end{equation*}
Now, the policy distribution $\pi(\tau)$ and the target distribution, $p(\tau)$, can both be written in terms of dynamics and policy:
\begin{align*}
    \log \pi(\tau) &= \ell_d(\tau) + \sum_{t=1}^T \log \pi(a_t \mid s_t) \\
    \log p(\tau) &= \ell_d(\tau) + \sum_{t=1}^T \log \pi_p^*(a_t \mid s_t),
\end{align*}
where $\pi_p^*$ is the optimal MaxEnt RL policy for the reward $r(\tau) = \log p(\tau)$. By our assumption that there exists some $\pi$ such that $\pi(\tau) = p(\tau)$, we know that $\pi_p^*(\tau) = p(\tau)$.
With this notation in place, we can employ Proposition 2 of~\citet{norouzi2016reward}:
\begin{align}
    D_{\text{KL}}(p(\tau) \; \| \; \pi(\tau)) &\le D_{\text{KL}}(\pi(\tau) \; \| \; p(\tau)) + \sum_\tau (\ell_p(\tau) - \ell_\pi(\tau))^2 \nonumber \\
    &= D_{\text{KL}}(\pi(\tau) \; \| \; p(\tau)) + \sum_\tau \left(\sum_{t=1}^T \log \pi_p^*(a_t \mid s_t) - \sum_{t=1}^T \log \pi(a_t \mid s_t)\right)^2 . \label{eq:bound-3}
\end{align}
Note that the dynamics logits $\ell_d$ cancelled with one another. Now, we combine Equations~\ref{eq:bound-2} and~\ref{eq:bound-3} to obtain:
\begin{equation}
    \log \text{Regret}_{p_r^2}(\pi) \le \frac{1}{Z} D_{\text{KL}}(\pi(\tau) \; \| \; p(\tau)) + C(\pi, r),
\end{equation}
where
\begin{equation*}
    C(\pi, r) = \left(\sum_{t=1}^T \log \pi_p^*(a_t \mid s_t) - \sum_{t=1}^T \log \pi(a_t \mid s_t)\right)^2 + \frac{1}{4}(b - a)(1/a - 1/b).
\end{equation*}
As $\pi \rightarrow \pi_p^*$, difference of log probabilities (term 1) vanishes. Additionally, the ratio $p(\tau) / \pi(\tau) \rightarrow 1$, so we can take $a$ and $b$ (the limits on the probability ratio) towards 1. As $a \rightarrow 1$ and $b \rightarrow 1$, the second term in $C(\pi, r)$ vanishes as well. 
\end{proof}
In summary, the difference between the MaxEnt RL objective and the log regret of the meta-POMDP is controlled by a term $C(\pi, r)$. At the solution to the MaxEnt RL problem, this term is zero, implying that solving the MaxEnt RL problem will minimize regret on the meta-POMDP.

\section{Maximum Entropy RL and Adversarial Games}
\label{appendix:adversaries}

\subsection{Proof and Extensions of Theorem~\ref{thm:maxent-robust}}
\label{appendix:maxent-robust-proof}

Our proof of Theorem~\ref{thm:maxent-robust} consists of first applying Corollary~\ref{corollary:entropy}, and then rearranging terms.
\begin{proof}
\begin{align*}
    \max_{\pi \in \Pi} \E_\pi \left[\sum_{t=1}^T r(s_t, a_t) \right] + \mathcal{H}_\pi[a \mid s] 
    &= \max_{\pi \in \Pi} \E_\pi \left[\sum_{t=1}^T r(s_t, a_t) \right] + \min_{q \in \Pi} \E_\pi \left[\sum_{t=1}^T -\log q(a_t \mid s_t) \right] \\
    &= \max_{\pi \in \Pi}  \min_{q \in \Pi} \E_\pi \left[\sum_{t=1}^T \left( r(s_t, a_t) - \log q(a_t \mid s_t) \right) \right] \\
    &= \max_{\pi \in \Pi} \min_{r' \in R_r} \E_\pi \left[\sum_{t=1}^T r'(s_t, a_t) \right].
\end{align*}
\end{proof}

A corollary of this result is that the solution to the MaxEnt RL objective is robust. More precisely, the MaxEnt objective obtained by some policy is a lower bound on that policy's reward for \emph{any} reward function in some set.
\begin{corollary}
Let policy $\pi$ and reward function $r$ be given, and let $J$ be the MaxEnt objective policy $\pi$ on reward function $r$:
\begin{equation*}
    J(\pi, r) \triangleq \E_\pi \left[\sum_{t=1}^T r(s_t, a_t) \right] + \mathcal{H}_\pi[a \mid s].
\end{equation*}
Then the expected return of policy $\pi$ on any reward function in $R_r$ is at least $J$:
\begin{equation*}
    \E_\pi \left[\sum_{t=1}^T r'(s_t, a_t) \right] \ge J(\pi, r) \quad \forall r' \in R_r.
\end{equation*}
\end{corollary}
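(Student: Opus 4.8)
The plan is to reduce the claim to the per-policy form of Theorem~\ref{thm:maxent-robust} together with the elementary fact that a minimum lower-bounds each of its arguments, so the real content is just the nonnegativity of a KL divergence. Fix $\pi$ and $r$, and let $r' \in R_r$ be arbitrary. By the definition of the robust set in Equation~\ref{eq:robust-set} there is some $q \in \Pi$ with $r'(s,a) = r(s,a) - \log q(a \mid s)$. Substituting and splitting the sum by linearity of expectation gives
\begin{equation*}
    \E_\pi \left[\sum_{t=1}^T r'(s_t, a_t) \right] = \E_\pi \left[\sum_{t=1}^T r(s_t, a_t) \right] + \E_\pi \left[\sum_{t=1}^T -\log q(a_t \mid s_t) \right].
\end{equation*}
The first term already matches the reward term of $J(\pi, r)$, so it remains only to show that the cross-entropy term dominates the entropy term $\mathcal{H}_\pi[a \mid s] = \E_\pi\left[\sum_{t=1}^T -\log \pi(a_t \mid s_t)\right]$.

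The key step will be Gibbs' inequality applied state-by-state. For each $t$, I would condition on $s_t$ drawn from the $\pi$-induced marginal at step $t$ and write the inner expectation over $a_t \sim \pi(\cdot \mid s_t)$; the gap between the $q$-cross-entropy and the $\pi$-entropy is then exactly $D_{\text{KL}}(\pi(\cdot \mid s_t) \,\|\, q(\cdot \mid s_t)) \ge 0$. Summing over $t$ and taking the outer expectation over states yields
\begin{equation*}
    \E_\pi \left[\sum_{t=1}^T -\log q(a_t \mid s_t) \right] \ge \E_\pi \left[\sum_{t=1}^T -\log \pi(a_t \mid s_t) \right] = \mathcal{H}_\pi[a \mid s].
\end{equation*}
Combining the two displays gives $\E_\pi\left[\sum_{t=1}^T r'\right] \ge \E_\pi\left[\sum_{t=1}^T r\right] + \mathcal{H}_\pi[a \mid s] = J(\pi, r)$, and since $r' \in R_r$ was arbitrary the claim follows. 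Equivalently, minimizing over $q$ turns the second display into an equality (the minimizer is $q = \pi$), recovering the identity $J(\pi, r) = \min_{r' \in R_r} \E_\pi\left[\sum_{t=1}^T r'\right]$ that underlies Theorem~\ref{thm:maxent-robust}; a minimum is a lower bound on each of its arguments, which is exactly the stated corollary.

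I do not anticipate a genuine obstacle, as the result is essentially an unpacking of the nonnegativity of KL divergence that is already implicit in the proof of Theorem~\ref{thm:maxent-robust}. The only point needing a little care is the interchange that lets Gibbs' inequality be applied pointwise in $s_t$ under the outer state expectation, rather than to the joint trajectory distribution directly; once the cross-entropy term is rewritten as a state-expectation of a per-state cross-entropy, this is immediate.
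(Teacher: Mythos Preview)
Your proposal is correct and matches the paper's approach: the paper gives no separate proof of this corollary, treating it as immediate from Theorem~\ref{thm:maxent-robust} via the observation that a minimum lower-bounds each of its arguments. Your argument simply unpacks this by making the underlying Gibbs'/KL-nonnegativity step explicit, which is exactly the content of the first step of the paper's proof of Theorem~\ref{thm:maxent-robust}.
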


\subsection{Intuition for Robust Sets}
We can gain intuition for the robust set by explicitly writing out the definition of $\Pi$:
\begin{align*}
    \Pi &\triangleq \left\{q \mid q(s, a) \in [0, 1] \; \forall s, a \quad \text{and} \; \int_A q(a \mid s) da = 1 \; \forall s \right\} \\
     &= \left\{q \mid \log q(s, a) \in [-\infty, 0] \; \forall s, a \quad \text{and} \; \int_A e^{\log q(a \mid s)} da = 1 \; \forall s \right\} .
\end{align*}
Now, we can rewrite our definition of $R_r$ as follows:
\begin{align}
    R_r &= \left\{r(s, a) + u_s(a) \mid u_s(a) \ge 0 \; \forall s, a \; \text{and} \; \int_{\gA} e^{-u_s(a)} da = 1 \; \forall s \right\} \label{eq:robust-set-rewritten} \\
    &= \left\{r'(s, a) \mid r'(s, a) \ge 0 \; \forall s, a \; \text{and} \; \int_{\gA} e^{r(s, a) - r'(s, a)} da = 1 \; \forall s \right\} . \nonumber
\end{align}
Intuitively, we are robust to all reward functions obtained by adding (positive) additional reward to the original reward, with the only constraints being that (1) the reward function is increased enough, and (2) significantly increasing the reward for one state-action pair limits the amount the reward can be increased for another state-action pair.
One important caveat of the results in this section is that, individually, the reward functions in in the robust set $R_r$ are ``easier'' than the original reward function, in that they assign larger values to the reward at a given state and action:
\begin{equation*}
    r(s, a) \le r'(s, a) \quad \forall s \in \gS, a \in \gA, r' \in R_r.
\end{equation*}
Appendix~\ref{sec:temperatures} discusses the role of temperatures on the robustness of MaxEnt RL.

\subsection{Equivalence Classes of Robust Reward Control Problems}
\label{sec:equivalence-classes}
In this section, we aim to understand whether the policy obtained by running MaxEnt RL on reward function $r$ is robust to reward functions besides those in $R_r$. As a first step, we show that the policy is also robust to reward functions that are ``easier'' than those in $R_r$. We then show that the set $R_r$ is not unique and introduce a family of equivalent robust-reward control problems, all of which are equivalent to the original MaxEnt RL problem.

\subsubsection{Robustness to Dominated Reward Functions}
\begin{figure}[t]
    \centering
    \includegraphics[width=\linewidth]{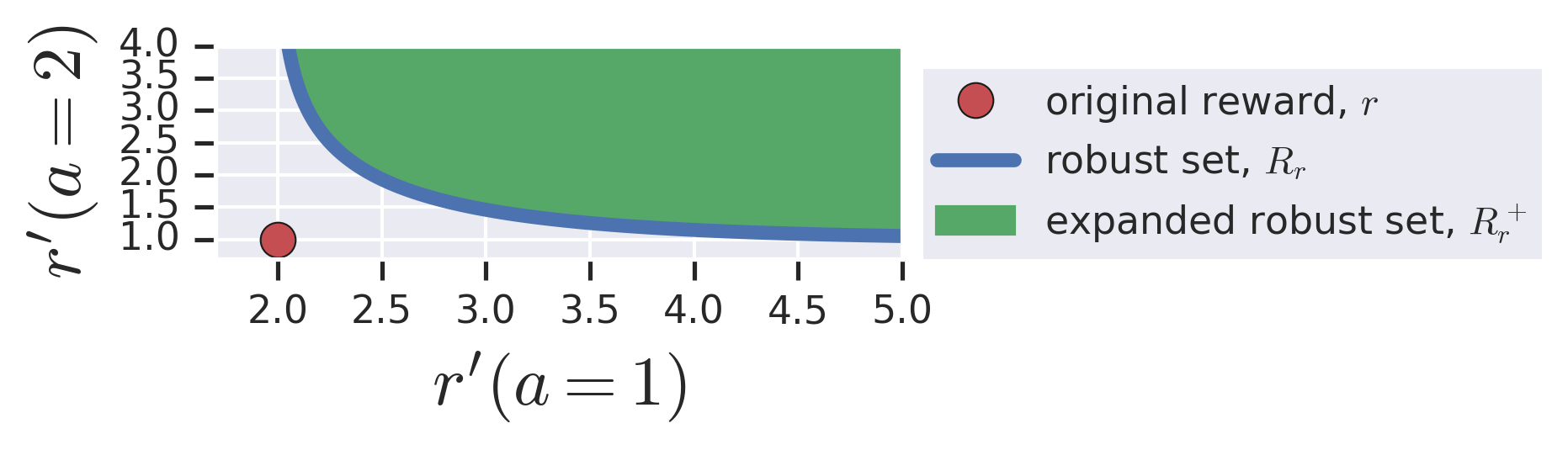}
    \vspace{-2em}
    \caption{\textbf{Expanded Robust Set}: MaxEnt RL is robust to the reward functions along the blue line, which implies that it is also robust to the reward functions in the shaded region.}
    \label{fig:expanded-robust-set}
\end{figure}

In Theorem~\ref{thm:maxent-robust}, we showed that the optimal MaxEnt RL policy $\pi$ is also the minimax policy for the reward functions in $R_r$. A somewhat trivial corollary is that $\pi$ is also robust to any reward function that is pointwise weakly better than a reward function in our robust set:
\begin{lemma} \label{corollary:dominance}
The MaxEnt RL objective for a reward function $r$ is equivalent to the robust-reward control objective for a class of reward functions, $R_r^+ \supseteq R_r$:
\begin{equation*}
    \E_\pi \left[\sum_{t=1}^T r(s_t, a_t) \right] + \mathcal{H}_\pi[a \mid s] = \min_{r \in R_r^+} \E_\pi \left[\sum_{t=1}^T r(s_t, a_t) \right] ,
\end{equation*}
where
\begin{equation*}
    R_r^+ \triangleq \{r^+(s, a) = r'(s, a) + c \mid c \ge 0, r' \in R_r \} .
\end{equation*}
\end{lemma}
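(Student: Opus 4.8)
The plan is to prove this as a pointwise identity at a fixed policy $\pi$, reducing it to the equality $J(\pi,r) = \min_{r' \in R_r} \E_\pi[\sum_{t=1}^T r'(s_t,a_t)]$ that Theorem~\ref{thm:maxent-robust} already establishes (read at fixed $\pi$). The single idea doing all the work is that translating any reward by a nonnegative constant can only increase the expected return over the fixed horizon, so the worst case over the enlarged set $R_r^+$ is never attained by a strictly positive shift and the minimax value is unchanged.

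First I would record the inclusion $R_r \subseteq R_r^+$, obtained by taking $c = 0$. Because taking a minimum over a larger set can only decrease the value, this immediately gives $\min_{r^+ \in R_r^+} \E_\pi[\sum_{t=1}^T r^+(s_t,a_t)] \le \min_{r' \in R_r} \E_\pi[\sum_{t=1}^T r'(s_t,a_t)] = J(\pi,r)$, where the final equality is Theorem~\ref{thm:maxent-robust}.

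For the reverse inequality I would take an arbitrary element $r^+(s,a) = r'(s,a) + c$ of $R_r^+$, with $r' \in R_r$ and $c \ge 0$, and compute its expected return under $\pi$. Since the constant $c$ is incurred once at each of the $T$ timesteps, $\E_\pi[\sum_{t=1}^T r^+(s_t,a_t)] = \E_\pi[\sum_{t=1}^T r'(s_t,a_t)] + Tc \ge \E_\pi[\sum_{t=1}^T r'(s_t,a_t)] \ge J(\pi,r)$, using $c \ge 0$, $T \ge 1$, and Theorem~\ref{thm:maxent-robust} for the last step. Since every element of $R_r^+$ has return at least $J(\pi,r)$, the minimum over $R_r^+$ is also at least $J(\pi,r)$, and combining with the previous paragraph yields the claimed equality (the minimum being attained at $c = 0$).

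There is no substantive obstacle here: the entire content is that a nonnegative reward shift accumulates to a nonnegative offset $Tc$ in the return, so the adversary's minimizing choice always sets $c = 0$ and the enlarged set $R_r^+$ inherits the same minimax value as $R_r$. The only point worth flagging is that the identity is stated pointwise in $\pi$, matching the preceding corollary, so no $\argmax_\pi$ needs to be carried through the argument; the equivalence of the two control problems in the sense of Definition~\ref{def:robust-reward} then follows simply by taking $\argmax_\pi$ of both sides.
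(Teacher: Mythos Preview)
Your proposal is correct and follows essentially the same approach as the paper. The paper's proof is a one-line observation that the added reward functions $r^+ \in R_r^+ \setminus R_r$ are never the $\argmin$ of the RHS, so enlarging $R_r$ to $R_r^+$ leaves the minimum unchanged; you have simply made this explicit by computing the $Tc$ offset and carrying out both inequalities.
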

\begin{proof}
To prove this, we simply note that these additional reward functions, $r^+ \in R_r^+ \setminus R_r$, will never be chosen as the arg min of the RHS. Thus, expanding the constraint set from $R_r$ to $R_r^+$ does not change the value of the RHS.
\end{proof}
This constrained says that we are robust to reward functioned that are bounded away from the original reward function. We plot the expanded robust set, $R_r^+$, in Figure~\ref{fig:expanded-robust-set}. Note that the expanded robust set corresponds to all reward functions ``above'' and to the ``right'' of our original robust set.
We can use this result to write a new definition for the robust set. Since we now know that $u$ can be made arbitrarily small, we can allow $e^{-u_s(a)}$ to take very small values. More precisely, whereas before we were constrained to add a term that integrated to one, we now are allows to add any term whose integral is at at most one:
\begin{equation*}
    R_r^+ = \left\{r(s, a) + u_s(a) \mid u_s(a) \ge 0 \; \forall s, a \; \text{and} \; \int_{\gA} e^{-u_s(a)} da \le 1 \; \forall s \right\} .
\end{equation*}

\subsubsection{Global Affine Transformations}
In optimal control, modifying a reward function by adding a global constant or scaling \emph{all} reward by a positive constant does not change the optimal policy. Thus, the robust-reward control problem for a set of rewards $R_r$ has the same solution as the robust-reward control problem for a scaled and shifted set of rewards:
\begin{lemma} \label{lemma:affine}
Let a set of reward functions $R_r$ be given, and let $b, c \in \mathrm{R}, b > 0$ be arbitrary constants. Then the following two optimization problems are equivalent:
\begin{equation*}
    \argmax_\pi \min_{r' \in R_r} \E_\pi \left[\sum_{t=1}^T r'(s_t, a_t) \right] = \argmax_\pi \min_{r' \in bR_r + c} \E_\pi \left[\sum_{t=1}^T r'(s_t, a_t) \right] .
\end{equation*}
\end{lemma}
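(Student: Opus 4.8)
The plan is to fix an arbitrary policy $\pi$, show that the inner minimization transforms under a strictly increasing affine map, and then observe that such maps preserve the outer $\argmax$. Throughout I interpret $bR_r + c$ as the elementwise transformation $bR_r + c = \{\, (s,a) \mapsto b\, r'(s,a) + c \;\mid\; r' \in R_r \,\}$.

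First I would unpack the inner objective for a single transformed reward. For any $r' \in R_r$, linearity of expectation gives
\begin{equation*}
    \E_\pi \left[ \sum_{t=1}^T \big( b\, r'(s_t, a_t) + c \big) \right] = b\, \E_\pi \left[ \sum_{t=1}^T r'(s_t, a_t) \right] + Tc,
\end{equation*}
where the key point is that every trajectory has exactly $T$ steps (as fixed in Section~\ref{sec:preliminaries}), so the contribution of the additive constant is $Tc$ regardless of both $\pi$ and $r'$. Next I would take the minimum over the reward set. Since $b > 0$, multiplication by $b$ is order-preserving and addition of the $\pi$- and $r'$-independent constant $Tc$ merely shifts every value equally, so the minimizer is unchanged and
\begin{equation*}
    \min_{\tilde r \in bR_r + c} \E_\pi \left[ \sum_{t=1}^T \tilde r(s_t, a_t) \right] = b \, \min_{r' \in R_r} \E_\pi \left[ \sum_{t=1}^T r'(s_t, a_t) \right] + Tc .
\end{equation*}

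Finally I would handle the outer $\argmax$. Writing $g(\pi)$ for the original inner objective $\min_{r' \in R_r} \E_\pi[\sum_t r'(s_t,a_t)]$, the display above says the transformed inner objective equals $b\, g(\pi) + Tc$. Because $b > 0$, the map $x \mapsto bx + Tc$ is strictly increasing, and $Tc$ does not depend on $\pi$; hence $\pi$ maximizes $b\, g(\pi) + Tc$ if and only if it maximizes $g(\pi)$, so the two $\argmax$ sets coincide, which is exactly the claim.

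There is no genuine analytical obstacle here; the proof is a short monotonicity argument. The only points requiring care are the two places where the hypothesis $b > 0$ is essential: it is needed to pull $b$ out of the $\min$ without turning it into a $\max$, and it is needed to preserve (rather than reverse) the outer $\argmax$. I would also make explicit that the constant $c$ contributes the fixed amount $Tc$ precisely because the horizon is fixed at $T$, so that it can be discarded as a policy-independent additive term.
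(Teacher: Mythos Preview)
Your proposal is correct and follows essentially the same route as the paper: the paper's proof also pulls the affine transformation through the expectation by linearity, factors $b$ out of the inner $\min$ (using $b>0$), isolates the constant $cT$, and then invokes invariance of $\argmax$ under positive affine transformations. Your version is a bit more explicit about where $b>0$ is used and why the constant is $Tc$, but the argument is the same.
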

\begin{proof}
The proof follows simply by linearity of expectation, and the invariance of the argmax to positive affine transformations:
\begin{align}
    \argmax_\pi \min_{r' \in bR_r + c} \E_\pi \left[\sum_{t=1}^T r'(s_t, a_t) \right]
    &= \argmax_\pi \min_{r' \in R_r} \E_\pi \left[\sum_{t=1}^T (br'(s_t, a_t) + c) \right] \\
    &= \argmax_\pi b\min_{r' \in R_r} \E_\pi \left[\sum_{t=1}^T r'(s_t, a_t) \right] + cT \\
    &= \argmax_\pi \min_{r' \in R_r} \E_\pi \left[\sum_{t=1}^T r'(s_t, a_t) \right] .
\end{align}
\end{proof}
Note, however, that the robust-reward control problem for rewards $R_r$ is not the same as being \emph{simultaneously} robust to the union of all affine transformations of robust sets. Said another way, there exists a \emph{family} of equivalent robust-reward control problems, each defined by a fixed affine transformation of $R_r$.

\begin{figure}[t]
    \centering
    \includegraphics[width=\linewidth]{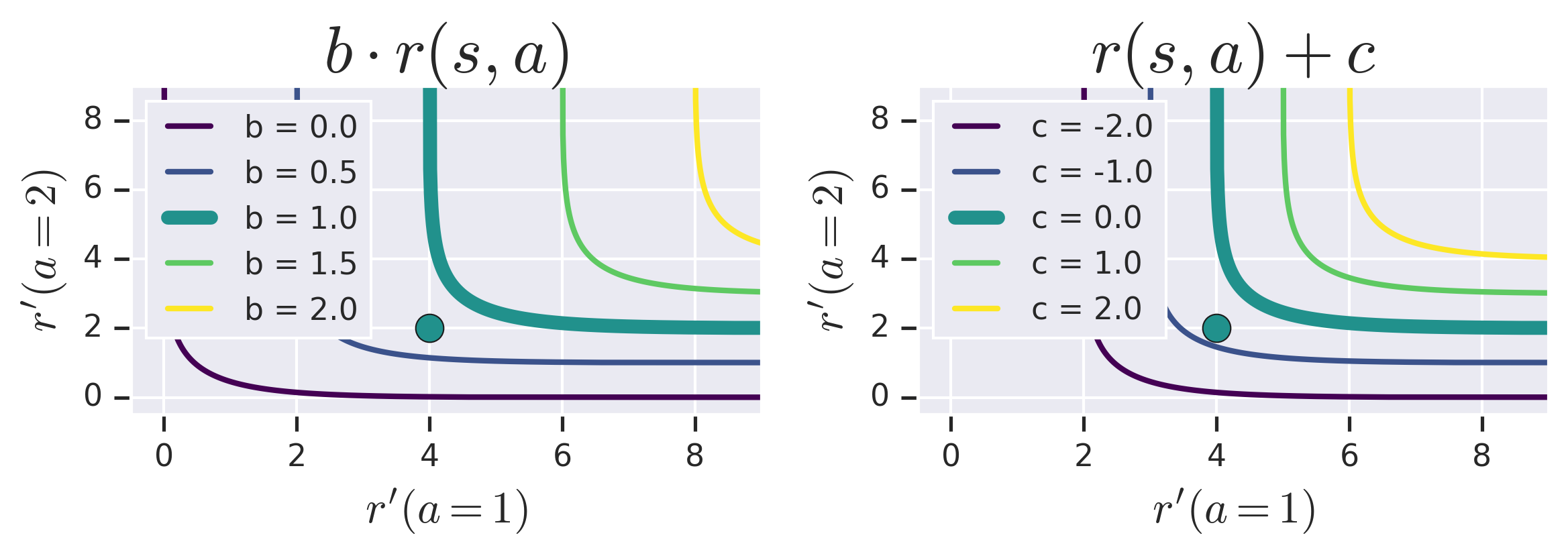}
    \vspace{-2em}
    \caption{\textbf{Global Affine Transformations}: In a simple, 2-armed bandit, we draw one reward function (turquoise dot) an its robust set (turquoise, thick line). The policy that is minimax for the reward functions in the robust set. The policy is also minimax for other robust sets, obtained by \figleft \; scaling and \figright \; shifting the original robust set. Importantly, the policy is not \emph{simultaneously} robust against the \emph{union} of these robust sets.}
    \label{fig:transformation}
\end{figure}
To gain some intuition for these transformations of reward functions, we apply a variety of transformations to the reward function from Figure~\ref{fig:example}. In Figure~\ref{fig:transformation} (left) we show the effect of multiplying the reward by a positive constant. Figure~\ref{fig:transformation} (right) shows the effect of adding a constant to the reward for every state and action.. For the robust sets in the right plot, there exists another reward function ($r^\dagger = r + c$) such that the shifted robust set is equal to the robust set of the shifted reward (i.e., $R_{r + c} = R_r + c$):
\begin{lemma} \label{lemma:additive}
Let reward function $r$ and constant $c \in \mathrm{R}$ be given. Define a reward function $r_c(s, a) \triangleq r(s, a) + c$. Let $\gW_r$ be the set of robust optimization problems which are equivalent to the MaxEnt RL problem on reward function $r$. The MaxEnt RL problem with the shifted reward function, $r_c$, is equivalent to the same set of robust optimization problems: $\gW_r = \gW_{r_c}$
\end{lemma}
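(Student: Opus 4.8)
The plan is to reduce the whole statement to a single algebraic identity relating the robust set of the shifted reward to a shift of the original robust set, namely $R_{r_c} = R_r + c$ (where $R_r + c$ denotes the set obtained by adding $c$ to each member of $R_r$), and then to match the two affine families. By Lemma~\ref{lemma:affine} and the discussion following it, $\gW_r$ is the family $\{\, b R_r + c' \mid b > 0,\ c' \in \mathrm{R}\,\}$ of positive-affine transformations of the robust set, so once $R_{r_c}$ is expressed in terms of $R_r$, comparing $\gW_{r_c}$ with $\gW_r$ becomes a matter of reindexing the constants.

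First I would establish the identity directly from the definition of the robust set in \Eqref{eq:robust-set}. Since $r_c(s,a) = r(s,a) + c$, the additive constant rides through the same parametrization by $q \in \Pi$:
\[
    R_{r_c} = \{\, (r(s,a) + c) - \log q(a \mid s) \mid q \in \Pi \,\} = \{\, (r(s,a) - \log q(a \mid s)) + c \mid q \in \Pi \,\} = R_r + c.
\]
This is the only step carrying real content; everything afterward is bookkeeping. Next I would substitute into the definition of $\gW_{r_c}$ and use that scaling distributes over the shift, $b(R_r + c) = bR_r + bc$:
\[
    \gW_{r_c} = \{\, b R_{r_c} + c' \mid b > 0,\ c' \in \mathrm{R} \,\} = \{\, b R_r + (bc + c') \mid b > 0,\ c' \in \mathrm{R} \,\}.
\]
For each fixed $b > 0$ the map $c' \mapsto bc + c'$ is a bijection of $\mathrm{R}$, so the effective shift $bc + c'$ sweeps out all of $\mathrm{R}$ as $c'$ does; hence the family is unchanged and $\gW_{r_c} = \{\, bR_r + c'' \mid b > 0,\ c'' \in \mathrm{R}\,\} = \gW_r$.

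The main (and essentially only) subtlety I anticipate is keeping the interaction between the scaling $b$ and the shift $c$ consistent: because scaling rescales the shift to $bc$, it is precisely a reparametrization of $c'$ alone — not of $b$ — that absorbs it, which is what makes the two families coincide exactly rather than merely overlap. As a sanity check, the same conclusion follows from the solution side: adding a constant $c$ to the reward changes the MaxEnt RL objective only by the constant $cT$, so $r$ and $r_c$ induce the same optimal policy and therefore the same class of equivalent robust-reward control problems.
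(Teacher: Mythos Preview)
Your proposal is correct and actually contains both arguments the paper gives, just with the emphasis reversed. The paper's proof is precisely your closing ``sanity check'': adding $c$ to the reward shifts the MaxEnt RL objective by the constant $Tc$, so the two MaxEnt problems share the same $\argmax$, and hence whatever set of robust problems is equivalent to one is automatically equivalent to the other. Your primary route --- the identity $R_{r_c}=R_r+c$ followed by reindexing the affine family --- is what the paper alludes to as ``the argument given above,'' namely the observation $R_{r+c}=R_r+c$ made in the text immediately preceding the lemma.

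One minor caveat: identifying $\gW_r$ with the affine family $\{bR_r+c'\mid b>0,\,c'\in\mathrm{R}\}$ is a narrower reading than the lemma's literal phrasing (the set of \emph{all} robust problems equivalent to MaxEnt RL on $r$, which would also include, e.g., the expanded set $R_r^+$ of Lemma~\ref{corollary:dominance}). Under that broader reading your primary argument only shows the affine subfamilies coincide. Your sanity check, however, settles the full statement without needing any characterization of $\gW_r$, which is exactly why the paper makes it the proof.
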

\begin{proof}
In addition to the argument given above, we can simply note that the two MaxEnt RL problems are the same:
\begin{equation*}
    \argmax_\pi \E_\pi \left[\sum_{t=1}^T r_c(s_t, a_t) \right] + \gH_\pi[a \mid s] = \argmax_\pi \E_\pi \left[\sum_{t=1}^T r(s_t, a_t) \right] + \gH_\pi[a \mid s] + \cancelto{\text{const.}}{T\cdot c} .
\end{equation*}
\end{proof}

Lemma~\ref{lemma:additive} is not true for the robust sets in the left plot. While the policy that is minimax for $R_r$ is also minimax for $b R_r$, there does not exist another reward function $r^\dagger$ such that $R_{r^\dagger} = b R_r$. The reason is that scaling the robust violates the constraint $\int e^{-u_s(a)} da = 1$.

\subsection{Temperatures}
\label{sec:temperatures}
Many algorithms for MaxEnt RL~\citep{haarnoja2018soft, fox2015taming, nachum2017bridging} include a temperature $\alpha > 0$ to balance the reward and entropy terms:
\begin{equation*}
    J(\pi, r) = \E_\pi \left[ \sum_{t=1}^T r(s_t, a_t) \right] + \alpha \gH_\pi[a \mid s] .
\end{equation*}
We can gain some intuition into the effect of this temperature on the set of reward functions to which we are robust. In particular, including a temperature $\alpha$ results in the following robust set:
\begin{align}
    R_r^\alpha &= \left \{r(s, a) + \alpha u_s(a) \mid u_s(a) \ge 0 \; \forall s, a \; \text{and} \; \int_{\gA} e^{-u_s(a)} da \le 1 \; \forall s \right \} \\
    &= \left \{r(s, a) + u_s(a) \mid u_s(a) \ge 0 \; \forall s, a \; \text{and} \; \int_{\gA} e^{-u_s(a) / \alpha} da \le 1 \; \forall s \right \} . \label{eq:robust-set-temp}
\end{align}
In the second line, we simply moved the temperature from the objective to the constraint by redefining $u_s(a) \rightarrow \frac{1}{\alpha} u_s(a)$. 

\begin{figure}[t]
    \centering
    \includegraphics[width=\linewidth]{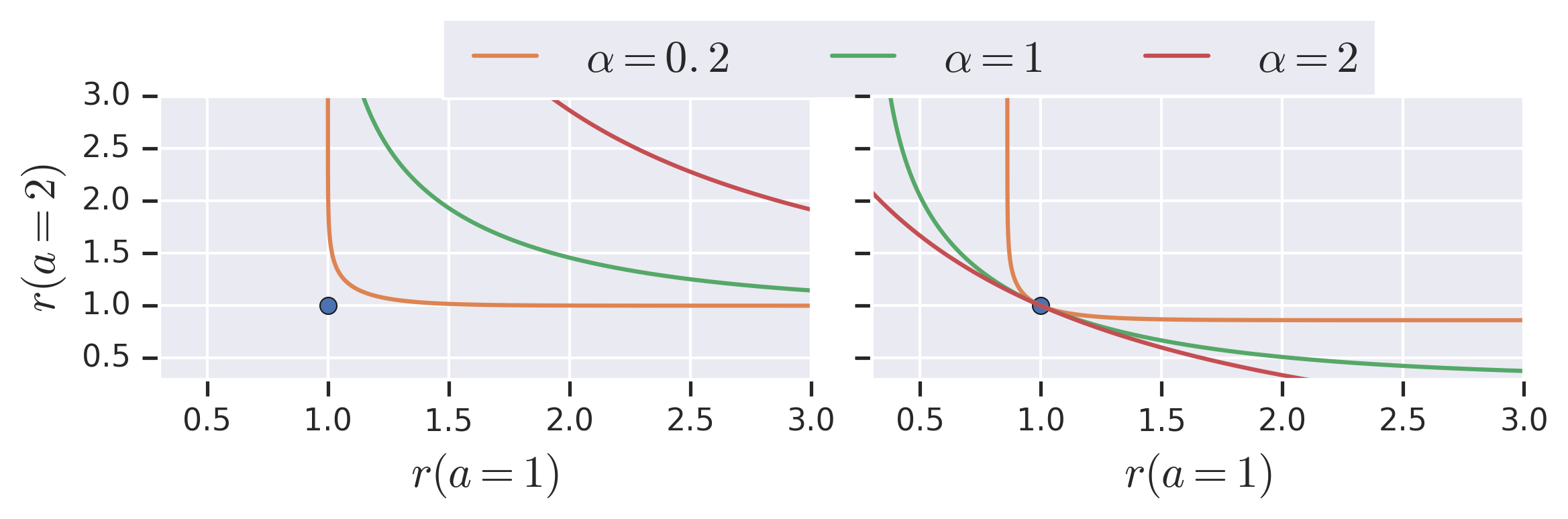}
    \vspace{-2em}
    \caption{\textbf{Effect of Temperature}: \figleft \; For a given reward function (blue dot), we plot the robust sets for various values of the temperature. Somewhat surprisingly, it appears that increasing the temperature decreases the set of reward functions that MaxEnt is robust against. \figright \; We examine the opposite: for a given reward function, which other robust sets might contain this reward function. We observe that robust sets corresponding to larger temperatures (i.e., the red curve) can be simultaneously robust against more reward functions than robust sets at lower temperatures. \label{fig:temperature}
    }
\end{figure}
We visualize the effect of the temperature in Figure~\ref{fig:temperature}. First, we fix a reward function $r$, and plot the robust set $R_r^\alpha$ for varying values of $\alpha$. Figure~\ref{fig:temperature} (left) shows the somewhat surprising result that increasing the temperature (i.e., putting more weight on the entropy term) makes the policy \emph{less} robust. In fact, the robust set for higher temperatures is a strict subset of the robust set for lower temperatures:
\begin{equation*}
    \alpha_1 < \alpha_2 \implies R_r^{\alpha_2} \subseteq R_r^{\alpha_2} .
\end{equation*}
This statement can be proven by simply noting that the function $e^{-\frac{x}{\alpha}}$ is an increasing function of $\alpha$ in Equation~\ref{eq:robust-set-temp}.
It is important to recognize that being robust against more reward functions is not always desirable. In many cases, to be robust to everything, an optimal policy must do nothing.

We now analyze the temperature in terms of the converse question: if a reward function $r'$ is included in a robust set, what other reward functions are included in that robust set? To do this, we take a reward function $r'$, and find robust sets $R_r^\alpha$ that include $r'$, for varying values of $\alpha$. As shown in Figure~\ref{fig:temperature} (right), if we must be robust to $r'$ and use a high temperature, the only other reward functions to which we are robust are those that are similar, or pointwise weakly better, than $r'$. In contrast, when using a small temperature, we are robust against a wide range of reward functions, including those that are highly dissimilar from our original reward function (i.e., have higher reward for some actions, lower reward for other actions). Intuitively, increasing the temperature allows us to simultaneously be robust to a larger set of reward functions.

\subsection{MaxEnt Solves Robust Control for Rewards}
\label{sec:robust-reward-control}
In Section~\ref{sec:adversaries}, we showed that MaxEnt RL is equivalent to some robust-reward problem. The aim of this section is to go backwards: given a set of reward functions, can we formulate a MaxEnt RL problem such that the robust-reward problem and the MaxEnt RL problem have the same solution?
\begin{theorem}
For any collection of reward functions $R$, there exists another reward function $r$ such that the MaxEnt RL policy w.r.t. $r$ is an optimal robust-reward policy for $R$:
\begin{equation*}
    \argmax_\pi \E_\pi \left[ \sum_{t=1}^T r(s_t, a_t) \right] + \gH_\pi[a \mid s] \subseteq \argmax_\pi \min_{r' \in R} \E_\pi \left[ \sum_{t=1}^T r'(s_t, a_t) \right] .
\end{equation*}
\end{theorem}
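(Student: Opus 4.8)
The plan is to solve the robust-reward control problem first, extract an optimal robust policy $\pi^*$, and then reverse-engineer a MaxEnt RL reward whose solution is exactly $\pi^*$. First I would reduce the problem to a convex–concave saddle point. Because $\E_\pi\left[\sum_t r'(s_t,a_t)\right]$ is linear in $r'$, for every fixed $\pi$ the inner minimum over $R$ equals the minimum over $\mathrm{conv}(R)$ (a linear functional attains its minimum over a convex hull at a generator), so the robust-reward problems for $R$ and for $\bar{R} \triangleq \mathrm{conv}(R)$ have identical objectives and hence identical $\argmax$ sets. Writing $J(\pi,r') \triangleq \E_\pi\left[\sum_{t=1}^T r'(s_t,a_t)\right]$ and parametrizing each policy by its (convex, compact) set of state–action occupancy measures, $J$ is bilinear in the occupancy measure and in $r' \in \bar{R}$. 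Sion's minimax theorem then yields a saddle point $(\pi^*, r^*)$ with $\max_\pi \min_{r'\in\bar{R}} J = \min_{r'\in\bar{R}}\max_\pi J$; in particular $\pi^* \in \argmax_\pi \min_{r'\in R} J$, so a robust-optimal policy exists.

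Next I would reverse-engineer the reward from $\pi^*$, exactly as in the decomposition of Lemma~\ref{lemma:decompose-reward}, by setting $r(s,a) \triangleq \log \pi^*(a\mid s)$, permitting the value $-\infty$ where $\pi^*$ assigns zero probability (i.e.\ $e^r = 0$). For this reward the MaxEnt RL objective telescopes into a sum of per-step KL divergences:
\begin{equation*}
\E_\pi\!\left[\sum_{t=1}^T \log\pi^*(a_t\mid s_t)\right] + \gH_\pi[a\mid s] = -\sum_{t=1}^T \E_{s\sim\rho_\pi^t}\!\left[D_{\text{KL}}\big(\pi(\cdot\mid s)\,\|\,\pi^*(\cdot\mid s)\big)\right] \le 0,
\end{equation*}
with equality if and only if $\pi(\cdot\mid s) = \pi^*(\cdot\mid s)$ at every state $s$ reached with positive probability. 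Thus the MaxEnt RL objective is maximized precisely by policies that agree with $\pi^*$ on all visited states, and an easy induction on $t$ shows that every such policy induces the same trajectory distribution as $\pi^*$.

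Finally I would close the subset relation. Any maximizer $\pi$ of the MaxEnt RL problem for this $r$ has the same trajectory distribution as $\pi^*$, hence $J(\pi, r') = J(\pi^*, r')$ for every $r'$ and in particular the same worst-case value $\min_{r'\in R} J(\pi, r') = \min_{r'\in R} J(\pi^*, r')$; since $\pi^*$ is robust-optimal, so is $\pi$. This yields $\argmax_\pi \big(\E_\pi[\sum_t r(s_t,a_t)] + \gH_\pi[a\mid s]\big) \subseteq \argmax_\pi \min_{r'\in R} J$, as claimed. The main obstacle is Step~1: securing the minimax equality and the existence of $\pi^*$, which needs convexity (obtained by passing to $\mathrm{conv}(R)$ and to occupancy measures) together with the compactness/continuity hypotheses of Sion's theorem, and a careful treatment of the degenerate case in which $\pi^*$ is deterministic, where $r$ takes the value $-\infty$ and one must verify that the MaxEnt optimum still collapses onto the support of $\pi^*$.
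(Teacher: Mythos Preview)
Your proposal is correct and follows the same core idea as the paper: take an optimal robust policy $\pi^*$ and set $r(s,a) \triangleq \log \pi^*(a\mid s)$, so that the MaxEnt objective collapses to a (negative) KL divergence maximized exactly by $\pi^*$. The paper's proof is essentially three lines and simply asserts both the existence of $\pi^*$ and the uniqueness of the MaxEnt maximizer via the KL formulation; your Sion/occupancy-measure argument for existence and your per-step KL decomposition for uniqueness are precisely the details the paper leaves implicit.
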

We use set containment, rather than equality, because there may be multiple solutions to the robust-reward control problem.
\begin{proof}
Let $\pi^*$ be a solution to the robust-reward control problem:
\begin{equation*}
    \pi^* \in \argmax_\pi \min_{r_i \in R} \E_\pi \left[ \sum_{t=1}^T r_i(s_t, a_t) \right] .
\end{equation*}
Define the MaxEnt RL reward function as follows:
\begin{equation*}
    r(s, a) = \log \pi^*(a \mid s) .
\end{equation*}
Substituting this reward function in Equation~\ref{eq:kl}, we see that the unique solution is $\pi = \pi^*$.
\end{proof}
Intuitively, this theorem states that we can use MaxEnt RL to solve \emph{any} robust-reward control problem that requires robustness with respect to any arbitrary set of rewards, if we can find the right corresponding reward function $r$ for MaxEnt RL. One way of viewing this theorem is as providing an avenue to sidestep the challenges of robust-reward optimization.
Unfortunately, we will still have to perform robust optimization to learn this magical reward function, but at least the cost of robust optimization might be amortized. In some sense, this result is similar to~\citet{ilyas2019adversarial}.

\subsection{Finding the Robust Reward Function}
\label{sec:reducing-robust-control}
In the previous section, we showed that a policy robust against any set of reward functions $R$ can be obtained by solving a MaxEnt RL problem. However, this requires calculating a reward function $r^*$ for MaxEnt RL, which is not in general an element in $R$. In this section, we aim to find the MaxEnt reward function that results in the optimal policy for the robust-reward control problem.
Our main idea is to find a reward function $r^*$ such that its robust set, $R_{r^*}$, contains the set of reward functions we want to be robust against, $R$. That is, for each $r_i \in R$, we want
\begin{equation*}
    r_i(s, a) = r^*(s, a) + u_s(a) \quad \text{for some $u_s(a)$ satisfying} \quad \int_\gA e^{-u_s(a)} da \le 1 \; \forall s.
\end{equation*}
Replacing $u$ with $r' - r^*$, we see that the MaxEnt reward function $r$ must satisfy the following constraints:
\begin{equation*}
    \int_\gA e^{r^*(s, a) - r'(s, a)} da \le 1 \; \forall s \in \gS, r' \in R .
\end{equation*}
We define $R^*(R)$ as the set of reward functions satisfying this constraint w.r.t. reward functions in $R$:
\begin{equation*}
    R^*(R) \triangleq \left\{r^* \; \bigg\vert \; \int_\gA e^{r^*(s, a) - r'(s, a)} da \le 1 \; \forall s \in \gS, r' \in R \right\}
\end{equation*}
We now use Corollary~\ref{corollary:entropy} to argue that all any applying MaxEnt RL to any reward function in $r^* \in R^*(R)$ lower bounds the robust-reward control objective.
\begin{corollary}
Let a set of reward functions $R$ be given, and let $r^* \in R^*(R)$ be an arbitrary reward function belonging to the feasible set of MaxEnt reward functions. Then
\begin{equation*}
    J(\pi, r^*) \le \min_{r' \in R} \E_\pi \left[\sum_{t=1}^T r'(s_t, a_t) \right] \qquad \forall \pi \in \Pi.
\end{equation*}
\end{corollary}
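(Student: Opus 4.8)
The plan is to reduce the claim to a pointwise statement over $R$ and then certify it with an explicitly constructed distribution. First I would fix an arbitrary $r' \in R$ and aim to establish $J(\pi, r^*) \le \E_\pi[\sum_{t=1}^T r'(s_t, a_t)]$; since $r'$ is arbitrary, taking the minimum over $R$ on the right-hand side then delivers the corollary. The tool for the entropy term is the variational characterization supplied by Corollary~\ref{corollary:entropy}: for any $q \in \Pi$, Gibbs' inequality gives $\mathcal{H}_\pi[a \mid s] \le \E_\pi[\sum_{t=1}^T -\log q(a_t \mid s_t)]$, with equality exactly when $q = \pi$. Substituting this into the definition of $J$ yields, for every $q \in \Pi$,
\begin{equation*}
    J(\pi, r^*) \le \E_\pi\left[\sum_{t=1}^T \big(r^*(s_t, a_t) - \log q(a_t \mid s_t)\big)\right].
\end{equation*}

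The heart of the argument is then to choose $q$ so that the integrand collapses to something bounded above by $r'$. I would set $q(a \mid s) \propto e^{r^*(s,a) - r'(s,a)}$, i.e.\ $q(a \mid s) = e^{r^*(s,a) - r'(s,a)}/Z(s)$ with normalizer $Z(s) = \int_\gA e^{r^*(s,a') - r'(s,a')} \, da'$. The defining constraint of $R^*(R)$, namely $\int_\gA e^{r^*(s,a) - r'(s,a)}\, da \le 1$ for all $s$, is exactly what guarantees $Z(s) \le 1$. This serves two purposes simultaneously: it lets $q$ be renormalized into a genuine conditional distribution in $\Pi$, and it makes $\log Z(s) \le 0$. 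Plugging this $q$ in gives $r^*(s,a) - \log q(a \mid s) = r'(s,a) + \log Z(s) \le r'(s,a)$ pointwise, so taking expectations yields $J(\pi, r^*) \le \E_\pi[\sum_{t=1}^T r'(s_t, a_t)]$, completing the per-$r'$ bound.

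The main obstacle is the construction of $q$ together with the careful use of the $R^*(R)$ constraint: the sub-normalization $Z(s) \le 1$, as opposed to exact equality, is precisely the slack that converts the tight equivalence of Theorem~\ref{thm:maxent-robust} into the one-sided inequality claimed here, and it is where the feasibility condition $r^* \in R^*(R)$ does all the work. Everything else is routine, namely linearity of expectation to separate the reward and entropy terms and the observation that the nonpositive term $\E_\pi[\sum_t \log Z(s_t)]$ may simply be discarded. An equivalent, slightly more structural framing would instead invoke the corollary immediately following Theorem~\ref{thm:maxent-robust} (that $\E_\pi[\sum_t r''] \ge J(\pi, r^*)$ for every $r'' \in R_{r^*}$) and then merely note that the chosen $r''(s,a) = r^*(s,a) - \log q(a \mid s) = r'(s,a) + \log Z(s)$ lies in $R_{r^*}$ and is pointwise dominated by $r'$; I would present the Gibbs-inequality version, since Corollary~\ref{corollary:entropy} is the explicitly cited ingredient.
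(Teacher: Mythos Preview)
Your argument is correct and is precisely the intended elaboration of the paper's (essentially unproven) corollary: the paper merely points to Corollary~\ref{corollary:entropy} and the construction of $R^*(R)$, and your Gibbs-inequality step together with the choice $q(a\mid s)=e^{r^*(s,a)-r'(s,a)}/Z(s)$ and the observation $Z(s)\le 1$ is exactly how that citation cashes out. The alternative framing you mention---that $r'(s,a)+\log Z(s)$ lies in the expanded robust set $R_{r^*}^+$ and is dominated by $r'$---is in fact the paper's own heuristic set-up immediately preceding the corollary, so both routes coincide.
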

Note that this bound holds for all feasible reward functions and all policies, so it also holds for the maximum $r^*$:
\begin{equation}
    \max_{r^* \in R^*(R)} J(\pi, r^*) \le 
    \min_{r' \in R} \E_\pi \left[\sum_{t=1}^T r'(s_t, a_t) \right] \qquad \forall \pi \in \Pi. 
\end{equation}
Defining $\pi^* = \argmax_\pi J(\pi, r^*)$, we get the following inequality:
\begin{equation}
    \max_{r^* \in R^*(R), \pi \in \Pi} J(\pi, r^*) \le \min_{r' \in R} \E_{\pi^*} \left[\sum_{t=1}^T r'(s_t, a_t) \right] \le \max_{\pi \in \Pi} \min_{r' \in R} \E_\pi \left[\sum_{t=1}^T r'(s_t, a_t) \right]. \label{eq:max-lower-bound}
\end{equation}
Thus, we can find the tightest lower bound by finding the policy $\pi$ and feasibly reward $r^*$ that maximize Equation~\ref{eq:max-lower-bound}:
\begin{align}
    \max_{r, \pi} & \quad \E_\pi \left[ \sum_{t=1}^T r(s_t, a_t) \right] + \gH_\pi[a \mid s] \label{eq:opt-problem} \\
    \text{s.t.} & \quad \int_\gA e^{r(s, a) - r'(s, a)} da \le 1 \; \forall \; s \in \gS, r' \in R . \nonumber
\end{align}
It is useful to note that the constraints are simply \textsc{LogSumExp} functions, which are convex. For continuous action spaces, we might approximate the constraint via sampling.
Given a particular policy, the optimization problem w.r.t. $r$ has a linear objective and convex constraint, so it can be solved extremely quickly using a convex optimization toolbox. Moreover, note that the problem can be solved independently for every state. The optimization problem is not necessarily convex in $\pi$.

\subsection{Another Computational Experiment}
This section presents an experiment to study the approach outlined above. Of particular interest is whether the lower bound (Eq~\ref{eq:max-lower-bound}) comes close the the optimal minimax policy.

We will solve robust-reward control problems on 5-armed bandits, where the robust set is a collection of 5 reward functions, each is drawn from a zero-mean, unit-variance Gaussian. For each reward function, we add a constant to all of the rewards to make them all positive. Doing so guarantees that the optimal minimax reward is positive. Since different bandit problems have different optimal minimax rewards, we will normalize the minimax reward so the maximum possible value is 1.

Our approach, which we refer to as ``LowerBound + MaxEnt'', solves the optimization problem in Equation~\ref{eq:opt-problem} by alternating between (1) solving a convex optimization problem to find the optimal reward function, and (2) computing the optimal MaxEnt RL policy for this reward function. Step 1 is done using CVXPY, while step 2 is done by exponentiated the reward function, and normalizing it to sum to one. Note that this approach is actually solving a harder problem: it is solving the robust-reward control problem for a much larger set of reward functions that contains the original set of reward functions. Because this approach is solving a more challenging problem, we do not expect that it will achieve the optimal minimax reward. However, we emphasize that this approach may be easier to implement than fictitious play, which we compare against. Different from experiments in Sections~\ref{sec:meta-pomdp-experiment} and~\ref{sec:robust-reward-experiment}, the ``LowerBound + MaxEnt'' approach assumes access to the full reward function, not just the rewards for the actions taken. For fair comparison, fictitious play will also use a policy player that has access to the reward function. Fictitious play is guaranteed to converge to the optimal minimax policy, so we assume that the minimax reward it converges to is optimal.
We compare to two baselines. The ``pointwise minimum policy'' finds the optimal policy for a new reward function formed by taking the pointwise minimum of all reward functions: $\tilde{r}(a) = \min_{r \in R}r(a)$. This strategy is quite simple and intuitive. The other baseline is a ``uniform policy'' that chooses actions uniformly at random.

\begin{figure}[t]
    \centering
    \includegraphics[width=\linewidth]{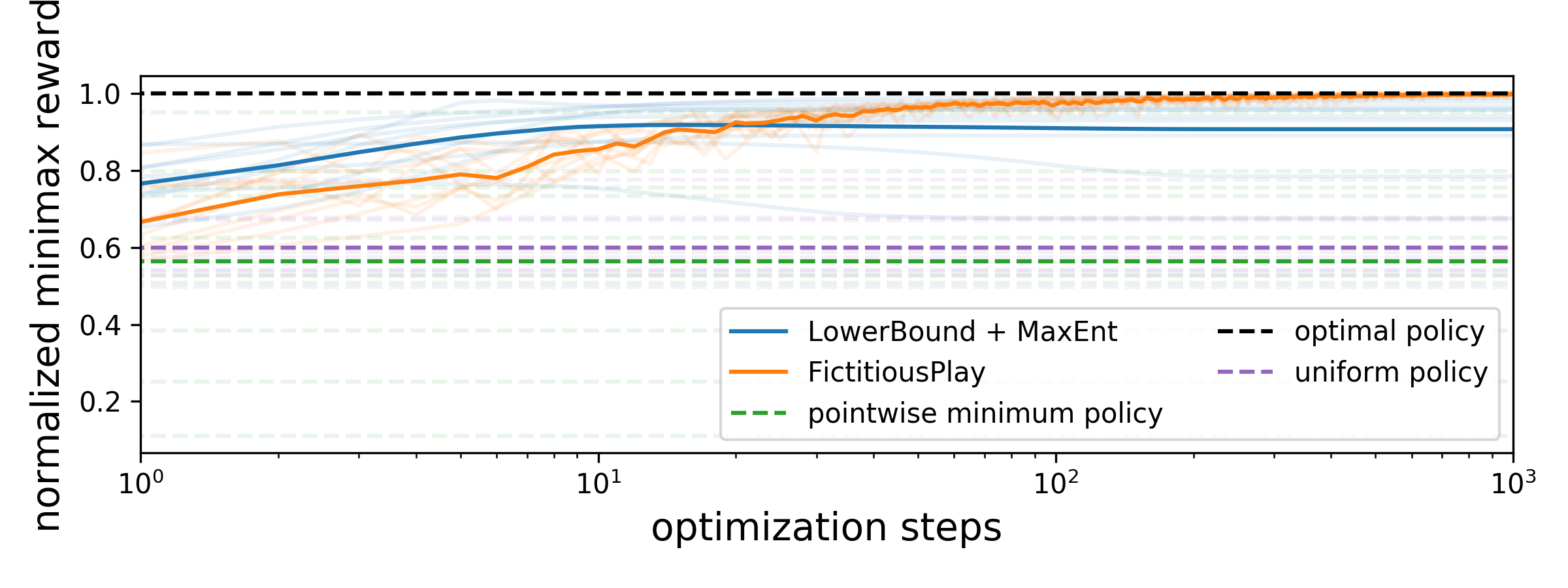}
    \vspace{-2em}
    \caption{\textbf{Approximately solving an arbitrary robust-reward control problem.} In this experiment, we aim to solve the robust-reward control problem for an \emph{arbitrary} set of reward functions. While we know that MaxEnt RL can be used to solve arbitrary robust-reward control problems exactly, doing so requires that we already know the optimal policy (\S~\ref{sec:robust-reward-control}). Instead, we use the approach outlined in Section~\ref{sec:reducing-robust-control}, which allows us to \emph{approximately} solve an arbitrary robust-reward control problem without knowing the solution apriori. This approach (``LowerBound + MaxEnt'') achieves near-optimal minimax reward.}
    \label{fig:robust-reward-exp}
\end{figure}
We ran each method on the same set of 10 robust-reward control bandit problems. In
Figure~\ref{fig:robust-reward-exp}, we plot the (normalized) minimax reward obtained by each method on each problem, as well as the average performance across all 10 problems. The ``LowerBound + MaxEnt'' approach converges to a normalized minimax reward of 0.91, close to the optimal value of 1. In contrast, the ``pointwise minimum policy'' and the ``uniform policy'' perform poorly, obtaining normalized minimax rewards of 0.56 and 0.60, respectively. In summary, while the method proposed for converting robust-reward control problems to MaxEnt RL problems does not converge to the optimal minimax policy, empirically it performs well.

\subsection{All Adversarial Games are MaxEnt Problems}
\label{sec:adversarial-games}

In this section, we generalize the previous result to show that MaxEnt RL can be used to solve to arbitrary control games, including robust control and regret minimization on POMDPs.

We can view the robust-reward control problem as a special case of a more general, two player, zero-sum game. The \emph{policy player} chooses a policy at every round (or, more precisely, a mixture of policies). The second player, the \emph{MDP player}, chooses the MDP with which we interact (or, more precisely, a mixture over MDPs). 
Formally, we define the set of policies $\Pi$ and MDPs $\mathcal{M}$, both of which correspond to \emph{pure strategies} for our players. To represent \emph{mixed strategies}, we use $\mathcal{Q}^\Pi \subseteq \mathcal{P}^\Pi$ and $\mathcal{Q}^\mathcal{M} \subseteq \mathcal{P}^{\mathcal{M}}$ to denote distributions over policies and models, respectively. Our goal is to find a Nash equilibrium for the following game:
\begin{align}
    & \max_{p_\pi \in \mathcal{Q}^\Pi} \min_{p_\mathcal{M} \in \mathcal{Q}^\mathcal{M}} \mathcal{F}(p_\pi, p_\mathcal{M}) \nonumber \\
    \text{where} \quad & \mathcal{F}(p_\pi, p_\mathcal{M}) \triangleq \E_{\substack{\pi \sim p_\pi,\; \mathcal{M} \sim p_\mathcal{M} \\ a \sim \pi(a \mid s),\; s' \sim p_\mathcal{M}(s' \mid s, a)}} \left[ \sum_{t=1}^T r_\mathcal{M}(s_t, a_t)\right] . \label{eq:robust-control}
\end{align}
The objective $\mathcal{F}$ says that, at the start of each episode, we \emph{independently} sample a policy and a MDP. We evaluate the policy w.r.t. this MDP, where both the dynamics and reward function are governed by the sampled MDP.
We recall that the Nash Existence Theorem~\citep{nash1950equilibrium} proves that a Nash Equilibrium will always exist.
With this intuition in place, we state our main result:
\begin{theorem} \label{thm:adversarial-games}
For every solution adversarial control problem (Eq.~\ref{eq:robust-control}), there exists a Markovian policy  $\pi^*$ that is optimal:
\begin{equation*}
    \exists \pi^* \; \text{s.t.} \; \min_{p_\mathcal{M} \in \mathcal{Q}^\mathcal{M}} \mathcal{F}(\mathbbm{1}_{\pi^*}, p_\mathcal{M}) =  \max_{p_\pi \in \mathcal{Q}^\Pi} \min_{p_\mathcal{M} \in \mathcal{Q}^\mathcal{M}} \mathcal{F}(p_\pi, p_\mathcal{M}) .
\end{equation*}
\end{theorem}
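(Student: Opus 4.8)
The plan is to treat Equation~\ref{eq:robust-control} as a two-player, zero-sum game with convex strategy sets $\mathcal{Q}^\Pi$ and $\mathcal{Q}^\mathcal{M}$ and a payoff $\mathcal{F}$ that is bilinear: linear in $p_\pi$ for fixed $p_\mathcal{M}$ and linear in $p_\mathcal{M}$ for fixed $p_\pi$. First I would invoke the Nash existence theorem~\citep{nash1950equilibrium} to obtain an equilibrium $(p_\pi^*, p_\mathcal{M}^*)$ together with the game value $V \triangleq \max_{p_\pi} \min_{p_\mathcal{M}} \mathcal{F}(p_\pi, p_\mathcal{M})$, so that by the minimax theorem $V = \min_{p_\mathcal{M}} \max_{p_\pi} \mathcal{F}(p_\pi, p_\mathcal{M})$. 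One direction of the claim is then immediate: since $\mathbbm{1}_{\pi^*} \in \mathcal{Q}^\Pi$ is itself a degenerate mixed strategy, $\min_{p_\mathcal{M}} \mathcal{F}(\mathbbm{1}_{\pi^*}, p_\mathcal{M}) \le V$ for every Markovian $\pi^*$. All of the work is in the reverse inequality: exhibiting a single Markovian $\pi^*$ whose worst-case value is at least $V$.

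Next I would simplify both strategy spaces using linearity. Because $\mathcal{F}(p_\pi, p_\mathcal{M})$ is linear in $p_\mathcal{M}$, the adversary's minimum over $\mathcal{Q}^\mathcal{M}$ is attained at a vertex, so $\min_{p_\mathcal{M}} \mathcal{F}(\cdot, p_\mathcal{M}) = \min_{\mathcal{M}} \mathcal{F}(\cdot, \mathbbm{1}_\mathcal{M})$; it therefore suffices to secure value $V$ against every \emph{pure} MDP. Symmetrically, for any fixed adversary strategy $\mathcal{F}(p_\pi, p_\mathcal{M})$ is linear in $p_\pi$, so mixing over policies can never help the policy player \emph{respond} to a fixed adversary: the benefit of the mixture $p_\pi^*$ is purely a security (maximin) effect, exactly as a randomized strategy helps in rock-paper-scissors.

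The heart of the argument is to remove the remaining randomization, i.e.\ to replace the mixture $p_\pi^*$ by a single stationary policy without losing the maximin value. My plan is to use that $\mathcal{F}$ depends on the policy only through its state-action occupancy measures, together with the occupancy-matching result of~\citet[Theorem 2.8]{ziebart2010modeling} already used in Appendix~\ref{appendix:exists-matching-policy}: given a non-Markovian mixture, there is a Markovian policy inducing the same state-action marginals. Concretely, I would argue that if $\pi^*$ reproduces the occupancy of $p_\pi^*$ in \emph{every} MDP the adversary can play, then $\mathcal{F}(\mathbbm{1}_{\pi^*}, \mathbbm{1}_\mathcal{M}) = \mathcal{F}(p_\pi^*, \mathbbm{1}_\mathcal{M})$ for all $\mathcal{M}$, and since $p_\pi^*$ is an equilibrium (hence maximin-optimal) strategy, $\pi^*$ inherits worst-case value $V$, completing the reverse inequality.

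The main obstacle is precisely this last step, because the Markovian policy produced by occupancy matching is defined relative to a \emph{fixed} set of dynamics. When every MDP in the support shares the same transition kernel --- the robust-reward setting of Section~\ref{sec:adversaries}, where the adversary perturbs only the reward --- the mixture induces a single occupancy measure, Ziebart's theorem yields one stationary $\pi^*$ matching it, and that $\pi^*$ matches the occupancy in every MDP at once, so the theorem follows cleanly. When the dynamics genuinely differ across MDPs, the occupancy-to-policy inversion gives a \emph{different} stationary policy for each MDP, and it is no longer obvious that any single reactive policy secures $V$ (being a best response to the equilibrium adversary is not enough, as rock-paper-scissors shows). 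To handle this case I would either (i) pass to the augmented MDP with state $(s, \mathcal{M})$, form the joint occupancy measure there, and characterize the image of the reactive-policy class, or (ii) establish directly that $\pi \mapsto \mathcal{F}(\mathbbm{1}_\pi, p_\mathcal{M})$ is quasi-concave on the convex policy polytope $\Pi$, so that Sion's minimax theorem applies to the restricted game $\max_{\pi \in \Pi} \min_{p_\mathcal{M}}$ and yields equality together with an attained maximizer. I expect verifying one of these --- in effect, proving that correlating a policy's randomization across states via a mixture cannot improve the worst case --- to be the crux of the proof.
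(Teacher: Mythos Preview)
Your core approach---reducing $\mathcal{F}$ to a function of state--action occupancy measures and then invoking \citet[Theorem~2.8]{ziebart2010modeling} to replace the equilibrium mixture $p_\pi^*$ by a single Markovian policy with the same marginals---is exactly what the paper does. The paper's proof is in fact terser than your plan: it forms the aggregate marginal $\bar\rho(s,a)=\int p_\pi(\pi)\,\rho_\pi(s,a)\,d\pi$, cites Ziebart to obtain a Markovian $\pi^*$ with $\rho_{\pi^*}=\bar\rho$, and concludes $\min_{p_\mathcal{M}}\mathcal{F}(\mathbbm{1}_{\pi^*},p_\mathcal{M})=\min_{p_\mathcal{M}}\mathcal{F}(p_\pi^*,p_\mathcal{M})$. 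It does not separately argue the ``easy'' direction, set up the minimax value, or reduce the adversary to pure strategies as you do.

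The obstacle you isolate in your last paragraph---that $\rho_\pi$ and hence Ziebart's construction are defined relative to a \emph{fixed} transition kernel, so when the MDPs in $\mathcal{Q}^\mathcal{M}$ differ in dynamics the occupancy-matching step produces a different $\pi^*$ for each $\mathcal{M}$---is not addressed in the paper. The paper writes $\rho_\pi(s,a)$ without specifying which dynamics generate it and applies the theorem as if a single occupancy suffices. So your analysis is strictly more careful: in the shared-dynamics setting (robust-reward control, and the POMDP case where the adversary's set is a singleton) both your argument and the paper's go through cleanly; for genuinely varying dynamics your proposed remedies (lifting to an augmented state $(s,\mathcal{M})$, or establishing quasi-concavity and applying Sion) go beyond what the paper attempts. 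You have correctly located a gap that the paper's own proof leaves open rather than one you introduced.
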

In the statement of the theorem above, we have used $\mathbbm{1}_{\pi^*}$ to indicate that the policy player uses a pure strategy, deterministically using policy $\pi^*$. While one might expect that mixtures of policies might be more robust, this theorem says this is not the case.
\begin{proof}
To begin, we note that $\mathcal{F}$ depends solely on the \emph{marginal} distribution over states and actions visited by the policy. We will use $\rho_\pi(s, a)$ to denote this marginal distribution. Given the mixture over policies $p_\pi(\pi)$, each with its own marginal distribution, we can form the aggregate marginal distribution, $\bar{\rho}(s, a)$:
\begin{equation*}
    \bar{\rho}(s, a) = \int_\Pi p_\pi(\pi) \rho_\pi(s, a) d\pi .
\end{equation*}
We can now employ \citet[Theorem 2.8]{ziebart2010modeling} to argue that there exists a single policy $\pi^*$ with the same marginal distribution:
\begin{equation*}
    \exists \; \pi^* \; s.t.\;  \rho_{\pi^*}(s, a) = \bar{\rho}(s, a) \; \forall s, a .
\end{equation*}
Thus, we conclude that
\begin{equation*}
\min_{p_\mathcal{M} \in \mathcal{Q}^\mathcal{M}} \mathcal{F}(\mathbbm{1}_{\pi^*}, p_\mathcal{M}) = \min_{p_\mathcal{M} \in \mathcal{Q}^\mathcal{M}} \mathcal{F}(p_\pi^*, p_\mathcal{M}) .
\end{equation*}
\end{proof}
One curious aspect of this proof is that is does not require that we use MaxEnt policies.  In Appendix~\ref{appendix:why-max-ent}, we discuss how alternative forms of regularized control might be used to solve this same problem.
We now examine four special types of adversarial games. For each problem, MaxEnt RL can be used to find the optimal Markovian policy.
\begin{itemize}
    \item[] \emph{Robust-Reward Control} -- The robust-reward control problem (Definition~\ref{def:robust-reward}) is a special case where the MDPs that the adversary can choose among, $\gM$, have identical states, actions and dynamics, differing only in their reward functions. The adversary's choice of a distribution over MDPs is equivalent to choosing a distribution over reward functions.
    \item[] \emph{POMDPs} -- MaxEnt RL can be used to find the optimal Markovian policy for a POMDP. Recall that all POMDPs can be defined as distributions over MDPs. Let a POMDP be given, and let $p_\mathcal{M}^*$ be its corresponding distribution over MDPs. We now define the set of MDPs the adversary can among as $\mathcal{Q}^\mathcal{M} = \{p_\mathcal{M}^*\}$. That is, the adversary's only choice is $p_\mathcal{M}^*$. Note that the singleton set is closed under convex combinations (i.e., it contains all the required mixed strategies). Thus, we can invoke Theorem~\ref{thm:adversarial-games} to claim that MaxEnt RL can solve this problem.
    \item[] \emph{Robust Control} -- Next, we consider the general robust control problem, where an adversary chooses both the dynamics and the reward function. To invoke Theorem~\ref{thm:adversarial-games}, we simply define a set of MDPs with the same state and actions spaces, but which differ in their transition probabilities and reward functions. Note that this result is much stronger than the robust-reward control problem discussed in Section~\ref{sec:robust-reward-control}, as it includes robustness to dynamics.
    \item[] \emph{Robust Adversarial Reinforcement Learning} -- The robust adversarial RL problem~\citep{pinto2017robust} is defined in terms of a MDP and a collection of ``perturbation policies'', among which the adversary chooses the worst. Note that the original MDP combined with perturbations from one of the perturbation policies defines a new MDP with the same states, actions, and rewards, but modified transition dynamics. Thus, we can convert the original MDP and collection of perturbation policies into a collection of MDPs, each of which differs only in its transition function.  
\end{itemize}
While MaxEnt RL can find the optimal Markovian policy for each problem, restricting ourselves to Markovian policies may limit performance. For example, the optimal policy for a robust control problem might perform system ID internally, but this cannot be done by a Markovian policy.

\subsection{Alternative Forms of Regularized Control}
\label{appendix:why-max-ent}

In this section, we examine why we used MaxEnt RL in Section~\ref{sec:robust-reward-control}. If we write out the KKT stationarity conditions for the robust-reward control problem, we find that the solution to the robust-reward control problem $\pi$, is also the solution to a standard RL problem with a reward function that is a convex combination of the reward functions in the original set.
\begin{equation*}
    \nabla_\pi \E_\pi \left[\sum_{t=1}^T \bar{r}(s, a)\right] \implies \pi \in \arg\max_\pi \E_\pi \left[ \sum_{t=1}^T \bar{r}(s_t, a_t) \right] ,
\end{equation*}
where
\begin{equation*}
    \bar{r}(s, a) \triangleq \sum_i \lambda_i r_i(s, a).
\end{equation*}
Above, we have used $\lambda_i$ as the dual parameters for reward function $r_i$.
However, we have no guarantee that $\pi$ is the \emph{unique} solution to the RL problem with reward function $\bar{r}$. Using MaxEnt RL guarantees that the optimal policy is unique.\footnote{However, the parameters of the optimal policy may not be unique if there is a surjective mapping from policy parameters to policies.}
More broadly, we needed a regularized control problem. It is interesting to consider what other sorts of regularizers can induce unique solutions, thereby allowing us to reduce robust-reward control to these other problems as well. We leave this to future work.

\end{document}